\documentclass[letterpaper]{article} 
\usepackage{aaai2027}  
\usepackage[hyphens]{url}  
\usepackage{graphicx} 
\usepackage{natbib}  
\usepackage{caption} 
\usepackage{algorithm}
\usepackage{algorithmic}
\usepackage{amsmath}
\usepackage{amssymb}
\usepackage{amsthm}
\usepackage{bm}
\usepackage{amsthm}   
\usepackage{tcolorbox}
\usepackage{subcaption}
\usepackage{newfloat}
\usepackage{listings}
\DeclareCaptionStyle{ruled}{labelfont=normalfont,labelsep=colon,strut=off} 
\floatstyle{ruled}
\newfloat{listing}{tb}{lst}{}
\floatname{listing}{Listing}

\usepackage{booktabs}

\title{Autoregressive Decoding as a Martingale: Characterizing the Stochastic Dynamics of LLM Generation}
\author{
    Xiahui Zhu\textsuperscript{\rm 1}\equalcontrib,
    Zhenyi Qi\textsuperscript{\rm 1}\equalcontrib,
    Wanwei Liu\textsuperscript{\rm 1},
    Haoang Chi\textsuperscript{\rm 1}\corresponding,
    Yang Tan\textsuperscript{\rm 1},
    Xinyu Sun\textsuperscript{\rm 1},
    Ji Wang\textsuperscript{\rm 1}
}
\affiliations{
    \textsuperscript{\rm 1}College of Computer, National University of Defense Technology, Changsha, Hunan, China\\

    zhuxiahui21@nudt.edu.cn

}

\usepackage{tikz}
\usetikzlibrary{trees}
\theoremstyle{definition} 
\newtheorem{definition}{Definition}
\theoremstyle{plain} 
\newtheorem{theorem}{Theorem}

\begin{document}

\maketitle

\begin{abstract}
Large language models (LLMs) have achieved remarkable success across a wide range of language tasks. At their core, autoregressive LLMs are conditional probability models that learn to approximate the next-token distribution given the preceding context. Despite this probabilistic nature, the stochastic dynamics of autoregressive decoding have rarely been investigated from a rigorous probabilistic perspective. In this work, we define the per-step prediction error as the difference between the sampled token indicator and the model's conditional predictive distribution, and prove that the cumulative error process is a martingale with respect to the decoding filtration, implying that its conditional first-order moment is identically zero.
Beyond this first-order property, we derive a closed-form expression for the second-order behavior of the martingale sequence: its predictable quadratic variation reduces to $\sum_t\left(1-p(y_t\mid y_{<t})\right)$, computable from realized token probabilities alone. This identity establishes a direct theoretical connection between sequence fluctuations and predictive uncertainty during autoregressive decoding.
White-box experiments on  Llama-3.x and Qwen2.5 families (1B to 72B parameters) verify that the empirically measured second-order moments closely match the theoretical predictions derived from our martingale analysis. These results provide a probabilistic framework for analyzing autoregressive LLM decoding, reveal intrinsic probabilistic fingerprints of different language models, and establish a rigorous mathematical foundation for LLM-generated text detection and uncertainty quantification.
\end{abstract}

\begin{figure}[t]
    \centering
    \includegraphics[width=\linewidth]{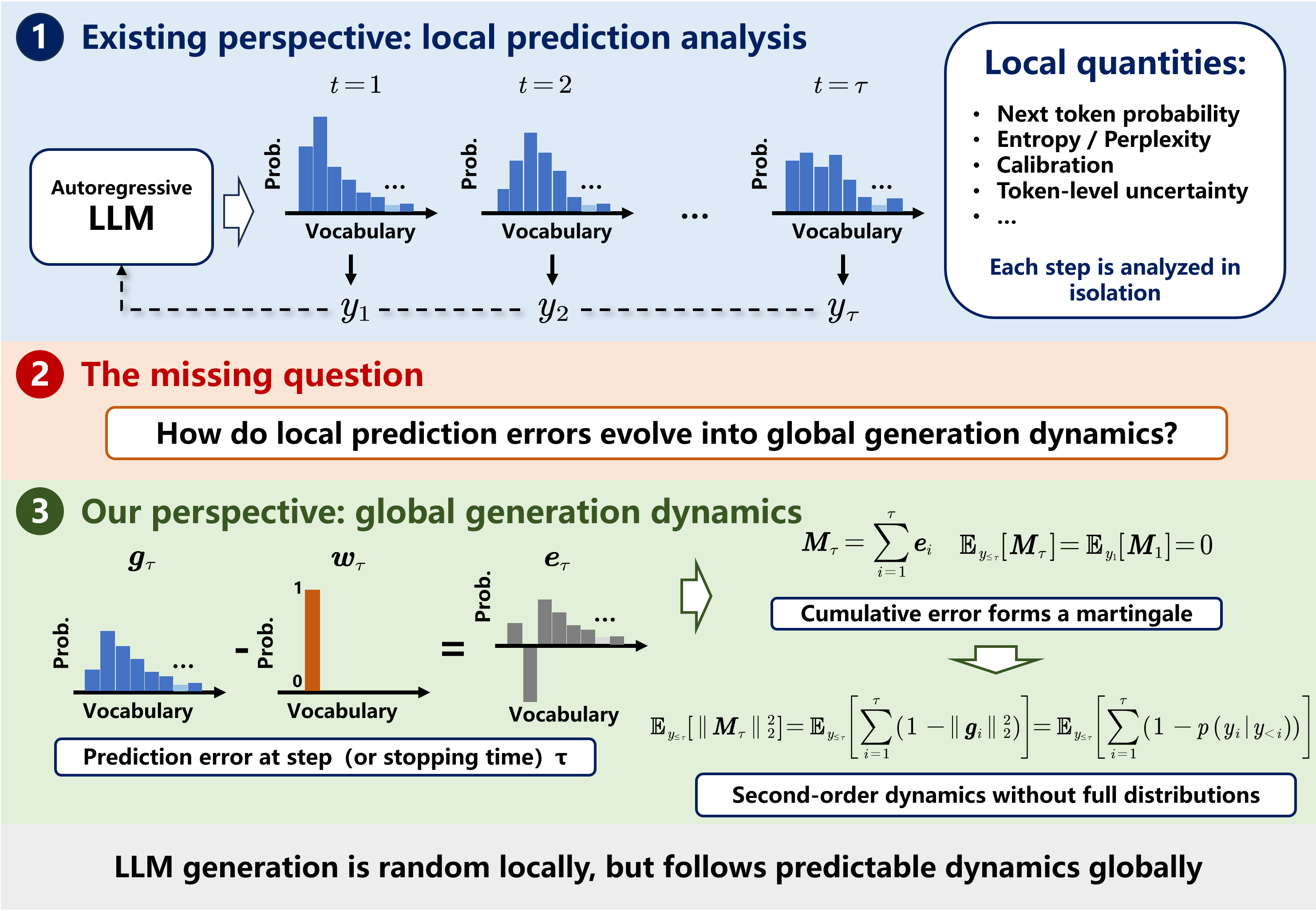}
    \caption{
        \textbf{A stochastic process perspective of autoregressive generation.}
        Existing analyses focus on local token-level uncertainty, while the evolution of prediction errors across the entire generation sequence remains unclear.
        Our perspective models autoregressive decoding as a stochastic process, where cumulative prediction errors form a martingale and exhibit predictable second-order dynamics without requiring full prediction distributions.
        }
    \label{fig:teaser}
\end{figure}


\section{Introduction}
\label{sec:introduction}

Large Language Models (LLMs) have achieved remarkable success across a wide range of natural language tasks through autoregressive decoding \cite{touvron2023llama, achiam2023gpt4}, where tokens are sequentially sampled from context-dependent predictive distributions. Despite their probabilistic nature, the stochastic dynamics governing the evolution of LLM generation sequences remain insufficiently understood from a rigorous probabilistic perspective \cite{welleck2020neural}.

Since autoregressive LLMs are fundamentally conditional probability models, understanding their generation dynamics is a natural yet underexplored direction. Existing analyses mainly focus on local predictive behaviors, including likelihood, uncertainty, entropy, and calibration metrics \cite{holtzman2020curious, meister2020if, guo2017calibration, desai2020calibration, gal2016dropout}. However, these approaches characterize individual prediction distributions rather than the evolution of stochastic deviations over complete decoding sequences.

A fundamental question therefore remains open: how do stochastic prediction errors evolve during autoregressive generation? Since the predictive distribution at each step depends on previously generated tokens, a principled stochastic-process characterization is needed to understand the intrinsic randomness of language generation.

In this work, we introduce a martingale perspective to analyze autoregressive LLM decoding. We define token-level prediction error as the difference between sampled token indicators and conditional predictive probabilities, and prove that the cumulative error process forms a martingale with respect to the decoding filtration. This establishes a fundamental first-order zero-drift property: although individual token realizations are stochastic, their cumulative deviation remains zero in expectation.

Beyond the first-order property, we characterize the fluctuation dynamics through second-order analysis. We derive an analytical identity showing that the martingale second-order moment can be computed directly from realized token probabilities, avoiding explicit construction of high-dimensional error vectors. This establishes a theoretical connection between sequence-level stochastic fluctuations and predictive distributions in autoregressive decoding.

To validate the theoretical framework, we conduct white-box experiments on multiple LLM families using Monte Carlo sampling. Results show that empirical decoding trajectories closely follow the theoretically predicted first- and second-order behaviors.

The main contributions of this work are summarized as follows:

\begin{itemize}
    \item We formulate autoregressive LLM decoding as a stochastic process and establish that the cumulative prediction error vector forms a martingale with respect to the decoding filtration.
    
    \item We derive first- and second-order moment characterizations of the decoding martingale, including an analytical quadratic variation expression determined by the model's predictive probability distribution.
    
    \item We provide comprehensive white-box empirical verification across multiple modern LLM families and scales, confirming the generality of the derived stochastic properties.
\end{itemize}

\section{Related Work}
\label{sec:related_work}

\paragraph{Analysis of Large Language Model Generation Dynamics.}
Understanding the behavior of autoregressive generation has become an important research direction for large language models. Previous studies have mainly investigated generation quality, uncertainty estimation, degeneration phenomena, and decoding strategies \cite{fan2018hierarchical, li2023contrastive, leviathan2023fast, mitchell2023detectgpt}. Holtzman et al. analyzed neural text degeneration and proposed nucleus sampling to alleviate unreliable low-probability token selection during generation \cite{holtzman2020curious, meister2022typical}. Meister et al. studied the relationship between beam search and probabilistic sequence generation, highlighting the importance of decoding strategies in controlling model behavior \cite{meister2020if}. 

Beyond decoding algorithms, recent studies have explored uncertainty estimation and confidence calibration for neural language models. Token-level likelihood, entropy, calibration metrics, and semantic-level uncertainty estimation approaches have been widely adopted to quantify predictive confidence during generation \cite{guo2017calibration, jiang2021know, kuhn2023semantic, lin2022teaching, kadavath2022language, brier1950verification}. However, these approaches primarily focus on the marginal properties of individual predictions. They characterize uncertainty at isolated decoding steps but do not explicitly model how stochastic deviations accumulate and evolve along the autoregressive generation sequence. In contrast, our work studies the sequential stochastic dynamics of prediction errors and establishes a martingale-based characterization of autoregressive decoding.

\paragraph{Theoretical Understanding of Neural Language Models.}
Recent advances in large language models have motivated increasing efforts toward developing theoretical explanations of their capabilities and behaviors \cite{wei2022emergent, ouyang2022training}. Existing theoretical studies have investigated scaling behavior, optimization properties, and internal mechanisms of large neural language models \cite{garg2022what}. Kaplan et al. revealed empirical scaling laws describing the relationship between model size, dataset size, and performance \cite{kaplan2020scaling}. Hoffmann et al. further refined these observations by studying compute-optimal scaling strategies for large language models \cite{hoffmann2022training}. 

In the realm of interpretability, current mechanistic approaches have made significant strides in uncovering how transformer architectures internally represent and process information \cite{meng2022locating, wang2023interpretability, olsson2022incontext}. While these studies yield profound insights into model parameters and internal activations, the stochastic evolution of the generated sequence over time remains largely underexplored. Our work provides a complementary perspective by characterizing the macroscopic probabilistic dynamics induced during autoregressive inference, entirely independent of specific architectural components.

\paragraph{Martingale Theory in Machine Learning.}
Martingale theory provides a fundamental mathematical framework for analyzing stochastic processes whose future conditional expectation is determined by their current information state. Classical martingale results have played an important role in probability theory, sequential analysis, and stochastic optimization \cite{doob1953stochastic, freedman1975tail, mcdiarmid1989method}. In machine learning, martingale techniques have been widely applied to derive concentration bounds, analyze online learning algorithms, and establish theoretical guarantees for sequential decision-making problems. For example, Azuma-Hoeffding inequalities and related martingale concentration tools have been extensively used in reinforcement learning and multi-armed bandit analysis \cite{azuma1967weighted, bubeck2012regret, lattimore2020bandit}.

\paragraph{Stochastic Process Perspectives in Machine Learning.}
Stochastic process theory has provided powerful tools for analyzing sequential behaviors in machine learning systems. Recent studies have employed stochastic differential equations, diffusion processes, and stochastic optimization frameworks to characterize the dynamics of learning algorithms and generative models \cite{song2021score, li2022diffusion}. However, these analyses mainly focus on training dynamics or continuous generative processes, while the discrete stochastic evolution of autoregressive token generation remains largely unexplored. Our work differs by directly modeling the inference-time decoding sequence of LLMs as a discrete martingale process.

Despite its importance in sequential learning theory, martingale analysis has not been systematically explored for autoregressive language generation. In this work, we bridge this gap by introducing a martingale formulation for the cumulative prediction error process of LLM decoding. This perspective enables rigorous first- and second-order moment analysis of the generation sequence and provides a probabilistic characterization of autoregressive decoding dynamics.

\section{Preliminaries}
\label{sec:preliminaries}

In this section, we briefly review the central concepts of martingale theory within the context of probability theory and stochastic processes, providing a theoretical underpinning for the modeling of the autoregressive generation process discussed in the subsequent chapter.


Consider a complete probability space $(\Omega, \mathcal{F}, P)$. In the context of discrete-time autoregressive text generation, the output token sequence $\{y_t\}_{t \geq 1}$ produced by the stochastic process corresponds to the observed sequences in the sample space.

\begin{definition}[Filtration]
A sequence of sub-$\sigma$-algebras $\mathbb{F} = \{\mathcal{F}_t\}_{t \geq 1}$ on the measure space is called a \textit{filtration} if it satisfies the property of monotonicity, i.e., $\mathcal{F}_1 \subseteq \mathcal{F}_2 \subseteq \cdots \subseteq \mathcal{F}_n \subseteq \mathcal{F}$. In this study, we define $\mathcal{F}_t = \sigma(y_1,y_2, \dots, y_t)$ as the $\sigma$-algebra generated by the historical token sequence up to time $t$, representing all observed historical text information \cite{doob1953stochastic}.
\end{definition}


\begin{definition}[Martingale]
Given a filtration $\mathbb{F}$, a stochastic process $M = \{M_t\}_{t \geq 1}$ adapted to this filtration is called a martingale with respect to $\mathbb{F}$ if it satisfies:

\begin{enumerate}
    \item \textbf{Integrability:} For any $t \geq 1$, $\mathbb{E}[|M_t|] < \infty$;
    \item \textbf{Martingale Property:} For any $t \geq 1$, the conditional expectation equality holds almost surely:
    \begin{equation}
        \mathbb{E}[M_{t+1} \mid \mathcal{F}_t] = M_t.
    \end{equation}
\end{enumerate}
\end{definition}

In the context of text generation, this property implies that if the model prediction is accurate and unbiased, then the expected increment of the future evolution of the cumulative prediction error—where observed values deviate from predicted probabilities—given the historical information, should be zero.


In actual text generation tasks, the length of a sequence is often not fixed but determined by specific stopping conditions (such as generating an [EOS] token or reaching a maximum length limit). This is mathematically characterized by ``stopping time“.

\begin{definition}[Stopping Time]
A random variable $\tau: \Omega\to \mathbb{N}\cup \{+\infty\}$ is called a stopping time with respect to filtration $\mathbb{F}$ if for any $t$, the event $\{\tau = t\}$ depends only on the historical information up to time $t$, i.e., $\{\tau = t\} \in \mathcal{F}_t$ \cite{doob1953stochastic}.
\end{definition}

With the definition of stopping time, we have the theorem as follows.

\begin{theorem}[Martingale Stopping Theorem]
Let $M = \{M_t\}_{t \geq 1}$ be a martingale with respect to $\mathbb{F}$, and $\tau$ be a stopping time. If $\tau$ is bounded (i.e., there exists a constant $K$ such that $\tau \le K$ almost surely), then the expected value of the process at the stopping time equals its expected value at the initial time: $\mathbb{E}[M_\tau] = \mathbb{E}[M_1]$.
\end{theorem}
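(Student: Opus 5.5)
The plan is the standard telescoping argument. Since $\tau \le K$ almost surely, I write the stopped value as a finite sum of increments,
\begin{equation*}
M_\tau = M_1 + \sum_{t=1}^{K-1} (M_{t+1} - M_t)\,\mathbf{1}\{\tau > t\}.
\end{equation*}
To check this identity I will fix $\omega$ with $\tau(\omega)=s\in\{1,\dots,K\}$. Then the indicator equals $1$ exactly for $t=1,\dots,s-1$, and the sum telescopes to $M_s - M_1$. Because the sum has at most $K-1$ terms, $|M_\tau| \le \sum_{t=1}^{K}|M_t|$. Integrability of each $M_t$ then gives $\mathbb{E}[|M_\tau|]<\infty$, so every expectation below is finite.

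Next I take expectations and use linearity over the finitely many terms. For each $t$, the event $\{\tau > t\}$ is the complement of $\bigcup_{s\le t}\{\tau = s\}$. Each $\{\tau=s\}\in\mathcal{F}_s\subseteq\mathcal{F}_t$ by the definition of stopping time and monotonicity of the filtration, so $\{\tau>t\}\in\mathcal{F}_t$. The indicator is therefore a bounded $\mathcal{F}_t$-measurable random variable. By the tower property and "taking out what is known,"
\begin{equation*}
\mathbb{E}\bigl[(M_{t+1}-M_t)\mathbf{1}\{\tau>t\}\bigr] = \mathbb{E}\bigl[\mathbf{1}\{\tau>t\}\,\mathbb{E}[M_{t+1}-M_t\mid\mathcal{F}_t]\bigr] = 0,
\end{equation*}
where the last equality is the martingale property $\mathbb{E}[M_{t+1}\mid\mathcal{F}_t]=M_t$ almost surely. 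Summing over $t$ gives $\mathbb{E}[M_\tau]=\mathbb{E}[M_1]$.

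The main point needing care, though not a real obstacle, is the measurability of $\{\tau>t\}$ with respect to $\mathcal{F}_t$. This is what makes the stopped increments have zero mean; it fails for non-stopping random times, which could "peek" at $M_{t+1}$. A second point is that boundedness of $\tau$ is used only to keep the sum finite. That finiteness lets us exchange expectation and summation and guarantees integrability of $M_\tau$ without any uniform integrability assumption. If the paper's convention allows $\tau$ to take the value $+\infty$, boundedness excludes this case. A value $\tau=0$ cannot occur under the indexing $t\ge 1$.
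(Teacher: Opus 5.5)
Your proof is correct and complete. The telescoping identity $M_\tau=M_1+\sum_{t=1}^{K-1}(M_{t+1}-M_t)\mathbf{1}\{\tau>t\}$ holds almost surely, $\{\tau>t\}\in\mathcal{F}_t$ is exactly the stopping-time property, and the tower property makes each stopped increment mean-zero. The argument also applies componentwise to the paper's vector-valued $\bm{M}_t$. One small slip: the bound $|M_\tau|\le\sum_{t=1}^{K}|M_t|$ is true, but it follows from $M_\tau(\omega)=M_{\tau(\omega)}(\omega)$ with $\tau(\omega)\le K$, not from counting the terms of the telescoping sum. Integrability is immediate either way.

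The paper gives no proof of this statement. It is recorded in the preliminaries as a standard result and applied only with $\tau\le l_{\mathrm{max}}$ to conclude $\mathbb{E}[\bm{M}_\tau]=\mathbb{E}[\bm{M}_1]=0$, so there is no in-paper route to compare with directly.

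The closest analogue is the appendix proof of the stopped second-moment identity $\mathbb{E}[\|\bm{M}_\tau\|_2^2]=\mathbb{E}[\sum_{t=1}^{\tau}(1-\|\bm{g}_t\|_2^2)]$. That proof writes $\|\bm{M}_\tau\|_2^2=\sum_t\mathbb{I}_{\{\tau=t\}}\|\bm{M}_t\|_2^2$, conditions backward on $y_{<l_{\mathrm{max}}},y_{<l_{\mathrm{max}}-1},\dots$, and regroups the indicators into $\mathbb{I}_{\{\tau\ge t\}}$ at each level.

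Your forward decomposition is a cleaner form of the same mechanism. In the paper's notation it reads $\bm{M}_\tau=\sum_{t=1}^{l_{\mathrm{max}}}\mathbb{I}_{\{\tau\ge t\}}\bm{e}_t$, where $\{\tau\ge t\}$ is determined by $y_{<t}$. Squaring this and reusing your tower argument, the cross terms $\mathbb{E}[\mathbb{I}_{\{\tau\ge t\}}\,\bm{e}_s\cdot\bm{e}_t]$ with $s<t$ vanish. The diagonal terms become $\mathbb{E}[\mathbb{I}_{\{\tau\ge t\}}(1-\|\bm{g}_t\|_2^2)]$. So the paper's stopped second-moment theorem would follow in a few lines, without the level-by-level indicator bookkeeping.
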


Noticing that for LLMs the maximum generation length ensures a bounded stopping time, MST conditions are always satisfied. This guarantees that the expected cumulative prediction error remains statistically conserved for model-generated sequences, independent of stochastic termination.

\section{Theoretical Analysis}
\label{sec:methodology}

This section aims to prove and explore martingale properties of LLMs. First, we formalize the generation process of autoregressive language models as a stochastic process, prove that it satisfies the martingale property, and further examine the properties of the first-order moment. Second, based on previous results, we turn our attention to the second-order moment of the martingale, where a series of formulas are given to accommodate the practical operation of LLMs. Finally, we examine the meaning of martingale second-order moments, linking it to the stability of model outputs.

\subsection{Notations}
Let $\bm{X}$ and $\bm{Y} = \{y_t\}_{1\leq t\leq l_{\mathrm{max}}}$ be a discrete sequence of input and output text respectively, where $y_t \in \mathcal{V}$ denotes the token at time step $t$, $l_{\mathrm{max}}$ denotes the maximum generation length of LLM, and $\mathcal{V}$ denotes the complete vocabulary set of the model. We define the filtration $\mathbb{F} = \{\mathcal{F}_t\}_{t \geq 1}$, where $\mathcal{F}_t = \sigma(y_1,y_2, \dots, y_t)$. To be precise, $\mathcal{F}_t$ is the power set of $\mathcal{V}^t$.

The generation process of LLM, an autoregressive model, is essentially a parameterized conditional probability distribution $p(y_t\mid y_{<t} ;\bm{X},\bm{\theta})$, often abbreviated as $p(y_t\mid y_{<t})$ when $\bm{X}$ and $\bm{\theta}$ remain unchanged and are not the focus of discussion, where $y_{<t}$ represents the sequence $y_1y_2\dots y_{t-1}$. Similarly, we donte $y_{\leq t}$ as the sequence $y_1y_2\dots y_t$.

In what follows, the notation $\mathbb{E}_{y_t}[\cdot]$, $\mathbb{E}_{y_{<t}}[\cdot]$ will denote the expectations taken over the random variables $y_t$ and $y_1,y_1,\dots,y_{t-1}$ respectively.  Other cases follow analogously.

In practice, large language models output vectors rather than scalars, so we introduce the following vectors in $\mathbb{R}^{|\mathcal{V}|}$:

\textbf{Predictive Probability Vector} ($\bm{g}_t$):
\begin{equation}
    \bm{g}_{t,y}= p(y\mid y_{<t};\bm{X},\bm{\theta}).
\end{equation}

This vector is just the predictive probability vector given by the model for next token. It depends on input $\bm{X}$, parameters $\bm{\theta}$ and the history of output $y_{<t}$.

\textbf{Token Indicator Vector} ($\bm{w}_t$):
\begin{equation}
    \bm{w}_{t,y}=\mathbb{I}_{\{y=y_t\}}.
\end{equation}

It's the one-hot vector for the token actually generated by the model at time $t$.

\textbf{Prediction Error Vector} ($\bm{e}_t$):
\begin{equation}
    \bm{e}_t=\bm{w}_t-\bm{g}_t.
\end{equation}

Notice that $\mathbb{E}_{y_t}[\bm{w}_t\mid y_{<t}] = \bm{g}_t$, error vector $\bm{e}_t$ actually measures the discrepancy between $\bm{w}_t$ and its expectation $\bm{g}_t$.

\textbf{Cumulative Prediction Error Vector} ($\bm{M}_t$):
\begin{equation}
    \bm{M}_t=\sum_{i=1}^t \bm{e}_i.
\end{equation}

 Just as the name suggests. $\bm{M}_t$ is the accumulation of error $\bm{e}_t$, which is the central focus of our paper. We will proof that it is actually a martingale and ‌elaborate on its various properties.

To more meticulously model the stochastic process underlying text generation by LLM, it is essential to account for the randomness in the length of the generated text. Hence, we introduce the concept of stopping time:

\textbf{Stopping Time} ($\tau$):
A random variable $\tau: \Omega \to \{1,2,\dots,l_{\mathrm{max}}\}$ which is defined as follows:
\begin{equation}
    \tau := \inf\ \{ t > 0 : y_t\mathrm{\ is\ [EOS]} \}\cup \{l_{\mathrm{max}}\}.
\end{equation}

In the context of token tracing, $\tau$ is actually the output length. When [EOS] is generated, $\tau$ is the first occurrence time of the end-of-sequence token. Otherwise it is the maximum generation length $l_{\mathrm{max}}$ with length limit reached. It can be easily verified that $\{\tau = t\}\in \mathcal{F}_t$ and thus $\tau$ is a stopping time with respect to $\mathbb{F}$.

It should be noted that $\tau$ is strictly bounded by the maximum generation length $l_{\mathrm{max}}$, which satisfies the boundedness condition required for martingale stopping theorem.

\subsection{Martingale and its Property of First-Order Moment}

The cornerstone of our entire theoretical analysis rests upon the fact that $\bm{M}_t$ is a martingale with respect to the filtration $\mathbb{F}$.

\begin{theorem}
The stochastic process $\{\bm{M}_t\}_{t \geq 1}$ is a martingale with respect to the filtration $\mathbb{F}$.
\end{theorem}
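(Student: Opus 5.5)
We verify the three defining conditions coordinatewise, since $\bm{M}_t$ takes values in $\mathbb{R}^{|\mathcal{V}|}$ and a vector-valued process is a martingale exactly when each coordinate $M_{t,y}$, $y\in\mathcal{V}$, is a real martingale with respect to $\mathbb{F}$.

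\textbf{Adaptedness.} The vector $\bm{w}_t$ is a deterministic function of $y_t$. The vector $\bm{g}_t$ is a deterministic function of $y_{<t}$, because $\bm{X}$ and $\bm{\theta}$ are fixed. Hence $\bm{e}_t$ is $\mathcal{F}_t$-measurable, and so is the finite sum $\bm{M}_t=\sum_{i\le t}\bm{e}_i$.

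\textbf{Integrability.} Every coordinate of $\bm{w}_i$ and of $\bm{g}_i$ lies in $[0,1]$, so $|e_{i,y}|\le 1$ and $|M_{t,y}|\le t$. This gives $\mathbb{E}|M_{t,y}|\le t<\infty$. No moment assumptions are needed: the sample space for $t$ steps is the finite set $\mathcal{V}^t$, so all expectations are finite sums.

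\textbf{Martingale property.} Since $\bm{M}_{t+1}=\bm{M}_t+\bm{e}_{t+1}$ and $\bm{M}_t$ is $\mathcal{F}_t$-measurable, it suffices to show $\mathbb{E}[\bm{e}_{t+1}\mid\mathcal{F}_t]=\bm{0}$. The vector $\bm{g}_{t+1}$ is $\mathcal{F}_t$-measurable, so it pulls out of the conditional expectation. Because $\mathcal{F}_t$ is generated by the finite partition into atoms $\{y_{\le t}=s\}$, the conditional expectation is computed atom by atom:
\[
\mathbb{E}[w_{t+1,y}\mid \mathcal{F}_t] = P(y_{t+1}=y\mid y_{\le t}) = p(y\mid y_{\le t};\bm{X},\bm{\theta}) = g_{t+1,y}.
\]
This is the identity already noted after the definition of $\bm{e}_t$. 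The increment therefore has zero conditional mean, and $\mathbb{E}[\bm{M}_{t+1}\mid\mathcal{F}_t]=\bm{M}_t$.

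The only real obstacle is the middle equality above. It requires the probability measure $P$ on sequences to be exactly the one induced by sampling from the model's own distribution $p(\cdot\mid y_{<t})$, that is, the decoding distribution and the recorded $\bm{g}_t$ coincide. If decoding used a transformed distribution, such as temperature, top-$k$ or nucleus sampling, $\bm{g}_t$ would have to be redefined as that transformed distribution for the argument to go through.

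A related subtlety is what happens after the stopping time $\tau$, when [EOS] has been emitted. There are two ways to handle it: treat later tokens as continuing to be sampled from $p$, or freeze the process at an absorbing state, giving $\bm{M}_{t\wedge\tau}$. In the second case the stopped process is still a martingale, since stopping a martingale at a stopping time preserves the martingale property, and this also prepares the application of the Martingale Stopping Theorem with $\tau\le l_{\max}$.
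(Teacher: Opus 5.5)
Your proof is correct and follows the same route as the paper. Adaptedness comes from $\bm{w}_t,\bm{g}_t$ being functions of $y_{\le t}$. Integrability comes from a crude bound: the paper bounds $\mathbb{E}\|\bm{M}_t\|_2\le 2t$, while you bound each coordinate by $t$. The martingale property follows from $\mathbb{E}[\bm{w}_t\mid y_{<t}]=\bm{g}_t$, so the increment has zero conditional mean. Your remark that this identity requires the sampling measure to coincide with the recorded $\bm{g}_t$ (for instance, $\bm{g}_t$ must be the tempered distribution when the temperature differs from $1$) makes explicit an assumption the paper uses implicitly.
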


Furthermore, we define the first-order moment of the martingale as one aspect of the study of its martingale properties.

\begin{definition}[First-Order Moment of the Martingale]
    Given a martingale $\{M_t\}_{t\geq 1}$, the first-order moment of martingale $\{M_t\}_{t\geq 1}$ at time $t$ (or stopping time $\tau$) is $\mathbb{E}_{y_{\leq t}}[\bm{M}_t]$ (or $\mathbb{E}_{y_{\leq l_{\mathrm{max}}}}[\bm{M}_{\tau}]$).
\end{definition}

It is not difficult to see that the first-order moment of the martingale $\bm{M}_t$ in fact characterizes the expected value of the accumulated prediction error. By the martingale expectation property, it follows that 
\begin{equation}
    \mathbb{E}_{y_{\leq t}}[\bm{M}_t]=\mathbb{E}_{y_1}[\bm{M}_1]=\mathbb{E}_{y_1}[\bm{w}_1]-\bm{g}_1=0.
    \label{eq:first-order}
\end{equation}

Furthermore, by martingale stopping theorem, we obtain
\begin{equation}
    \mathbb{E}_{y_{\leq l_{\mathrm{max}}}}[\bm{M}_{\tau}]=\mathbb{E}_{y_1}[\bm{M}_1]=0.
    \label{eq:first-order-stopping-time}
\end{equation}

It should be emphasized that equation~\eqref{eq:first-order-stopping-time} holds for arbitrary $\mathbb{F}$-stopping times, thereby rendering it a stronger version of equation~\eqref{eq:first-order}---indeed, equation~\eqref{eq:first-order} is obtained as the special case where $\tau \equiv t$. In what follows, any theorem or equation stated in terms of stopping times is uniformly understood as a strengthened formulation of the corresponding non-stopping-time version; we shall not repeat this remark subsequently.

Equations \eqref{eq:first-order} and \eqref{eq:first-order-stopping-time} show that whether for fixed time $t$ or stopping time $\tau$, the first-order moment of $\{\bm{M}_t\}_{t\geq 1}$ is always 0, which means accumulated prediction error vanishes to 0 in the long run. This implies that the first-order moment alone cannot furnish sufficient information for analytical and practical purposes, which motivates us to investigate the properties of the second-order moment in the next section.
 
\subsection{Calculation of the Second-Order
Moment of the Martingale}

To gain a deeper characterization of martingale properties, we now introduce the second-order moment of the martingale.

\begin{definition}[Second-Order
Moment of the Martingale]
    Given a martingale $\{M_t\}_{t\geq 1}$, the second-order moment of martingale $\{M_t\}_{t\geq 1}$ at time $t$ (or stopping time $\tau$) is $\mathbb{E}[\|\bm{M}_t\|_2^2]$ (or $\mathbb{E}_{y_{\leq l_{\mathrm{max}}}}[\|\bm{M}_{\tau}\|_2^2]$).
\end{definition}

Regarding the calculation of the second-order moment, we present the following theorem:

\begin{theorem}
    \label{thm:raw-calculate}
    The second-oreder moment of martingale $\{M_t\}_{t\geq 1}$ can be expressed as follows:
    \begin{equation}
        \mathbb{E}_{y_{\leq t}}[\|\bm{M}_t\|_2^2] = \mathbb{E}_{y_{< t}}\left[\sum_{i=1}^t(1 - \|\bm{g}_i\|_2^2)\right].
        \label{eq:M-square-formula}
    \end{equation}
\end{theorem}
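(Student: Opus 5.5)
The plan is to expand $\|\bm{M}_t\|_2^2 = \sum_{i}\|\bm{e}_i\|_2^2 + 2\sum_{i<j}\langle \bm{e}_i,\bm{e}_j\rangle$. The cross terms should vanish because $\bm{M}_t$ is a martingale (the preceding theorem), and each diagonal term should be evaluated exactly via the one-hot structure of $\bm{w}_i$. Equivalently, we could argue inductively through the orthogonality of martingale increments: $\mathbb{E}[\|\bm{M}_t\|_2^2] = \mathbb{E}[\|\bm{M}_{t-1}\|_2^2] + \mathbb{E}[\|\bm{e}_t\|_2^2]$. The double-sum formulation is more transparent, so we use that.

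\textbf{Cross terms.} Fix $i<j$. Since $\bm{e}_i$ is determined by $y_{\le i}$, it is $\mathcal{F}_{j-1}$-measurable. Conditioning on $y_{<j}$ gives
\begin{equation*}
\mathbb{E}[\langle \bm{e}_i,\bm{e}_j\rangle] = \mathbb{E}_{y_{<j}}\bigl[\langle \bm{e}_i, \mathbb{E}_{y_j}[\bm{e}_j\mid y_{<j}]\rangle\bigr] = 0,
\end{equation*}
because $\mathbb{E}_{y_j}[\bm{w}_j\mid y_{<j}]=\bm{g}_j$ by the remark following the definition of $\bm{e}_t$. Integrability is automatic: every vector involved has entries in $[-1,1]$ and lives in the finite space $\mathcal{V}^{t}$, so all expectations are finite sums.

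\textbf{Diagonal terms.} This is the step carrying the actual content. Conditionally on $y_{<i}$, we have
\begin{equation*}
\|\bm{e}_i\|_2^2 = \|\bm{w}_i\|_2^2 - 2\langle \bm{w}_i,\bm{g}_i\rangle + \|\bm{g}_i\|_2^2 = 1 - 2\,\bm{g}_{i,y_i} + \|\bm{g}_i\|_2^2,
\end{equation*}
since $\bm{w}_i$ is one-hot. Taking $\mathbb{E}_{y_i}[\cdot\mid y_{<i}]$ gives $\mathbb{E}[\bm{g}_{i,y_i}\mid y_{<i}] = \sum_y \bm{g}_{i,y}^2 = \|\bm{g}_i\|_2^2$. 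Hence
\begin{equation*}
\mathbb{E}_{y_i}[\|\bm{e}_i\|_2^2\mid y_{<i}] = 1-\|\bm{g}_i\|_2^2,
\end{equation*}
which is a function of $y_{<i}$ only.

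\textbf{Assembly.} Summing over $i\le t$ and applying the tower property, each term $1-\|\bm{g}_i\|_2^2$ depends only on $y_{<i}\subseteq y_{<t}$. Therefore the outer expectation over $y_{\le t}$ reduces to one over $y_{<t}$, which yields \eqref{eq:M-square-formula}.

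The main obstacle is bookkeeping rather than any deep estimate. We need to justify carefully that the expectation of a function of $y_{<i}$ under the law of $y_{\le t}$ equals its expectation under the marginal law of $y_{<t}$. This holds because the joint law is generated by the chain of conditionals $p(y_s\mid y_{<s})$, so integrating out the later tokens leaves the marginal unchanged. The same mechanism handles the cross terms, and that identity should be stated explicitly once and then reused in both places.
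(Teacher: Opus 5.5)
Your proposal is correct and follows essentially the paper's route. The paper conditions on $y_{<t}$ to get $\mathbb{E}_{y_t}[\|\bm{M}_t\|_2^2\mid y_{<t}]=\|\bm{M}_{t-1}\|_2^2+1-\|\bm{g}_t\|_2^2$, killing the cross term via $\mathbb{E}_{y_t}[\bm{e}_t\mid y_{<t}]=0$ and evaluating the diagonal term by the same one-hot computation you give, then telescopes; your double-sum expansion is the unrolled form of that recursion and rests on the same two facts.
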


Theorem~\ref{thm:raw-calculate} offers a viable approach for computing the second order moment, which is determined exclusively by the vector $\bm{g}_t$, indicating that it constitutes an inherent characteristic of the LLM. However, it doesn't consider the random generation text length $\tau$---it can only calculate for fixed length $t$. Hence, we turn to the formula for the second order moment associated with stopping time. 

\begin{theorem}
    The second order moment of martingale $\{\bm{M}_t\}_{t\geq 1}$ with stopping time can be expressed as follows:
    \begin{equation}
        \mathbb{E}_{y_{\leq l_{\mathrm{max}}}}[\|\bm{M}_{\tau}\|_2^2]=\mathbb{E}_{y_{\leq l_{\mathrm{max}}}}\left[\sum_{t=1}^{\tau}(1-\|\bm{g}_t\|_2^2)\right].
        \label{eq:M-square-formula-terminate}
    \end{equation}
    \label{thm:stoptime-calculate}
\end{theorem}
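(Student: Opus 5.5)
We prove the identity by applying the optional stopping argument to the compensated squared-norm process instead of to $\bm{M}_t$ itself. Define
\[
N_t := \|\bm{M}_t\|_2^2 - \sum_{i=1}^t \left(1-\|\bm{g}_i\|_2^2\right),
\]
and show that $\{N_t\}_{t\geq 1}$ is a martingale with respect to $\mathbb{F}$. The claimed equation~\eqref{eq:M-square-formula-terminate} then follows from the Martingale Stopping Theorem, since $\tau\le l_{\mathrm{max}}$ is bounded.

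First we compute the conditional variance of one increment. Since $\bm{g}_t$ is $\mathcal{F}_{t-1}$-measurable and $\mathbb{E}[\bm{w}_t\mid\mathcal{F}_{t-1}]=\bm{g}_t$, we have $\mathbb{E}[\|\bm{e}_t\|_2^2\mid\mathcal{F}_{t-1}] = \mathbb{E}[\|\bm{w}_t\|_2^2\mid\mathcal{F}_{t-1}] - \|\bm{g}_t\|_2^2$. Because $\bm{w}_t$ is one-hot, $\|\bm{w}_t\|_2^2=1$, so this equals $1-\|\bm{g}_t\|_2^2$. Second, we expand
\[
\|\bm{M}_t\|_2^2=\|\bm{M}_{t-1}\|_2^2+2\langle \bm{M}_{t-1},\bm{e}_t\rangle+\|\bm{e}_t\|_2^2 .
\]
The cross term has zero conditional expectation: $\bm{M}_{t-1}$ is $\mathcal{F}_{t-1}$-measurable and $\mathbb{E}[\bm{e}_t\mid\mathcal{F}_{t-1}]=0$, which is exactly the increment property behind the martingale theorem for $\bm{M}_t$ stated above. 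Together these give $\mathbb{E}[N_t\mid\mathcal{F}_{t-1}]=N_{t-1}$. Integrability is trivial because every quantity is bounded: $\|\bm{e}_t\|_2\le 2$ and $t\le l_{\mathrm{max}}$.

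For the base case we need $\mathcal{F}_0$ to be trivial, with $\bm{g}_1$ deterministic given $\bm{X}$. The computation above then gives $\mathbb{E}[N_1]=0$. Applying the Martingale Stopping Theorem to $N$ yields $\mathbb{E}[N_\tau]=\mathbb{E}[N_1]=0$. Rearranging gives $\mathbb{E}[\|\bm{M}_\tau\|_2^2]=\mathbb{E}\left[\sum_{t=1}^{\tau}(1-\|\bm{g}_t\|_2^2)\right]$. Both sides depend only on $y_{\le l_{\mathrm{max}}}$, so this is the statement written with $\mathbb{E}_{y_{\leq l_{\mathrm{max}}}}$.

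The main obstacle is the indexing convention at the start. The paper's filtration starts at $\mathcal{F}_1$, whereas the argument needs a trivial $\mathcal{F}_0$ so that both $\mathbb{E}[\bm{M}_1]=0$ and $\mathbb{E}[N_1]=0$ hold. It also needs the predictability of $\bm{g}_t$ to be made explicit: $\bm{g}_t$ is a function of $y_{<t}$ only. A second subtlety arises if one prefers to treat generation as continuing past [EOS] up to $l_{\mathrm{max}}$. Then $\tau$ must be shown to be a stopping time for this extended process, which holds since $\{\tau=t\}\in\mathcal{F}_t$. An alternative that avoids optional stopping is to write $\|\bm{M}_\tau\|_2^2=\sum_t \mathbb{I}_{\{\tau\ge t\}}(\ldots)$ and use that $\{\tau\ge t\}\in\mathcal{F}_{t-1}$. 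The cross terms $\mathbb{I}_{\{\tau\ge t\}}\langle\bm{M}_{t-1},\bm{e}_t\rangle$ then vanish in expectation term by term. This alternative serves as a check.
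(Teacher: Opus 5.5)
Your argument is correct, but it is organized differently from the paper's. You isolate the compensator. From the one-step identity $\mathbb{E}[\|\bm{M}_t\|_2^2\mid\mathcal{F}_{t-1}]=\|\bm{M}_{t-1}\|_2^2+1-\|\bm{g}_t\|_2^2$ you conclude that $N_t$ is a martingale, and then you invoke the Martingale Stopping Theorem from the preliminaries once.

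The paper never forms $N_t$ and does not call that theorem here. It writes $\|\bm{M}_\tau\|_2^2=\sum_{t=1}^{l_{\mathrm{max}}}\mathbb{I}_{\{\tau=t\}}\|\bm{M}_t\|_2^2$ and conditions successively on $y_{<l_{\mathrm{max}}},y_{<l_{\mathrm{max}}-1},\dots$. At each stage it pulls out the $\mathcal{F}_{s-1}$-measurable indicator $\mathbb{I}_{\{\tau\ge s\}}$, applies the same one-step identity, and merges $\mathbb{I}_{\{\tau=s-1\}}+\mathbb{I}_{\{\tau\ge s\}}=\mathbb{I}_{\{\tau\ge s-1\}}$. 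In effect this is a hand-made proof of optional stopping for this particular process.

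What each route buys:
\begin{itemize}
\item Your route is shorter and more conceptual. It identifies $\sum_{i\le t}(1-\|\bm{g}_i\|_2^2)$ as the compensator of $\|\bm{M}_t\|_2^2$, which is the ``predictable quadratic variation'' the abstract refers to. It also yields the fixed-$t$ theorem free as the special case $\tau\equiv t$.
\item The paper's route is self-contained and ends in the explicit form $\mathbb{E}[\sum_{t=1}^{l_{\mathrm{max}}}\mathbb{I}_{\{\tau\ge t\}}(1-\|\bm{g}_t\|_2^2)]$. It reuses this form to prove the next theorem, converting $1-\|\bm{g}_t\|_2^2$ into $1-p(y_t\mid y_{<t})$ term by term.
\item You recover the same form simply by writing $\sum_{t=1}^{\tau}=\sum_{t=1}^{l_{\mathrm{max}}}\mathbb{I}_{\{\tau\ge t\}}$. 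Your closing ``alternative'' is essentially the paper's computation run forward rather than backward.
\end{itemize}

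Finally, the $\mathcal{F}_0$ issue you flag as the main obstacle is only notational. $\mathbb{E}[N_1]=0$ needs only that $\bm{g}_1$ is a constant vector (it depends only on $\bm{X}$ and $\bm{\theta}$) with $\mathbb{E}[\bm{w}_1]=\bm{g}_1$. That is exactly the fact the paper uses to get $\mathbb{E}[\bm{M}_1]=0$.
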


In contrast to Theorem~\ref{thm:raw-calculate}, Theorem~\ref{thm:stoptime-calculate} better reflects the practical operation of LLMs. Nevertheless, it presents an application-level concern: the individual components of $\bm{g}_t$ are not necessarily all available, depending on the accessibility (open-source vs. closed-source) of the model. In response, we present the following theorem:
\begin{theorem}
    \label{thm:raw-easy-calculate}
    The second-oreder moment of martingale $\{M_t\}_{t\geq 1}$ can be expressed as follows:
    \begin{equation}
        \mathbb{E}_{y_{\leq t}}[\|\bm{M}_t\|_2^2] =\mathbb{E}_{y_{\leq t}}\left[\sum_{i=1}^t(1-p(y_i\mid y_{<i}))\right].
        \label{eq:M-square-formula-easy}
    \end{equation}
\end{theorem}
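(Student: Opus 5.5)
The plan is to derive Theorem~\ref{thm:raw-easy-calculate} from Theorem~\ref{thm:raw-calculate} by replacing each $\|\bm{g}_i\|_2^2$ with the realized probability $p(y_i\mid y_{<i})$ under the expectation. The key observation is that, conditional on $y_{<i}$, the token $y_i$ is drawn from $\bm{g}_i$. Hence the conditional expectation of the realized probability is exactly the squared norm of the predictive vector:
\begin{equation*}
    \mathbb{E}_{y_i}\left[p(y_i\mid y_{<i}) \,\middle|\, y_{<i}\right] = \sum_{y\in\mathcal{V}} p(y\mid y_{<i})\,p(y\mid y_{<i}) = \|\bm{g}_i\|_2^2 .
\end{equation*}
Equivalently, $p(y_i\mid y_{<i}) = \langle \bm{w}_i,\bm{g}_i\rangle$, and $\mathbb{E}[\bm{w}_i\mid y_{<i}]=\bm{g}_i$. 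Since $\bm{g}_i$ is $\mathcal{F}_{i-1}$-measurable, it factors out of the conditional expectation.

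The steps, in order, are as follows. First, start from \eqref{eq:M-square-formula} and write its right-hand side as $\sum_{i=1}^t \mathbb{E}_{y_{<i}}[1-\|\bm{g}_i\|_2^2]$. This uses linearity and the fact that the $i$-th summand depends only on $y_{<i}$, so the expectation over $y_{<t}$ collapses to one over $y_{<i}$. Second, substitute the identity above, giving $\mathbb{E}_{y_{<i}}\bigl[\mathbb{E}_{y_i}[1-p(y_i\mid y_{<i})\mid y_{<i}]\bigr]$. Apply the tower property to get $\mathbb{E}_{y_{\le i}}[1-p(y_i\mid y_{<i})]$. Third, note that $1-p(y_i\mid y_{<i})$ is a function of $y_{\le i}$ only. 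Its expectation is therefore unchanged when taken over $y_{\le t}$, since the tower property integrates out $y_{i+1},\dots,y_t$ trivially. Summing over $i$ and using linearity then yields \eqref{eq:M-square-formula-easy}.

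All quantities are bounded in $[0,1]$ and the vocabulary is finite, so integrability and the interchange of finite sums and expectations pose no issues.

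The only delicate point is bookkeeping with the subscripted expectations. The paper's notation $\mathbb{E}_{y_{<t}}$ must be interpreted as integration against the joint law induced by the autoregressive model, so that the tower property is legitimate. I would make this explicit by writing every expectation as one over the full path $y_{\le t}$ under the model's joint distribution. Once that is settled, the argument is a direct conditional-expectation computation.

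If one prefers not to rely on Theorem~\ref{thm:raw-calculate}, the same route works from scratch. Expand $\|\bm{M}_t\|_2^2=\sum_i\|\bm{e}_i\|_2^2+2\sum_{i<j}\langle\bm{e}_i,\bm{e}_j\rangle$. The cross terms vanish by the martingale property, conditioning on $\mathcal{F}_{j-1}$. For the diagonal terms, $\|\bm{e}_i\|_2^2 = 1-2p(y_i\mid y_{<i})+\|\bm{g}_i\|_2^2$. Taking conditional expectation of this expression recovers $1-\|\bm{g}_i\|_2^2$, and the identity above converts that to $1-p(y_i\mid y_{<i})$.
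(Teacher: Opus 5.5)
Your proposal is correct and follows essentially the same route as the paper. Both arguments start from Theorem~\ref{thm:raw-calculate}, collapse the expectation of each summand to $y_{<i}$, and convert $1-\|\bm{g}_i\|_2^2$ into $\mathbb{E}_{y_i}[1-p(y_i\mid y_{<i})\mid y_{<i}]$ before extending back to $y_{\le t}$ and summing; the paper writes this step as an explicit sum over $\mathcal{V}^{i}$ using $p(y_{<i})\,p(y_i\mid y_{<i})=p(y_{\le i})$, and your tower-property argument is the conditional-expectation form of the same computation.
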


\begin{theorem}
    The second order moment of martingale $\{\bm{M}_t\}_{t\geq 1}$ with stopping time can be expressed as follows:
    \begin{equation}
        \begin{gathered}
            \mathbb{E}_{y_{\leq l_{\mathrm{max}}}}[\|\bm{M}_{\tau}\|_2^2]\\
            =\mathbb{E}_{y_{\leq l_{\mathrm{max}}}}\left[\sum_{t=1}^{\tau}(1-p(y_t\mid y_{<t}))\right].
        \end{gathered}
        \label{eq:M-square-formula-terminate-easy}
    \end{equation}
    \label{thm:stoptime-easy-calculate}
\end{theorem}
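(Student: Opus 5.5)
The plan is to derive Theorem~\ref{thm:stoptime-easy-calculate} from Theorem~\ref{thm:stoptime-calculate}. By that theorem it suffices to show
\begin{equation*}
\mathbb{E}\left[\sum_{t=1}^{\tau}\|\bm{g}_t\|_2^2\right]=\mathbb{E}\left[\sum_{t=1}^{\tau}p(y_t\mid y_{<t})\right],
\end{equation*}
where all expectations are over $y_{\leq l_{\mathrm{max}}}$. The constant parts agree trivially, since both sides contain $\sum_{t=1}^{\tau}1=\tau$.

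To handle the random upper limit, I will rewrite each sum as a sum over all times up to $l_{\mathrm{max}}$ with indicators:
\begin{equation*}
\sum_{t=1}^{\tau}a_t=\sum_{t=1}^{l_{\mathrm{max}}}\mathbb{I}_{\{\tau\geq t\}}a_t .
\end{equation*}
This works because $\tau\le l_{\mathrm{max}}$, so the rewritten sum is finite. The key observation is that $\{\tau\geq t\}=\{\tau\leq t-1\}^c\in\mathcal{F}_{t-1}$, because $\tau$ is a stopping time. So the indicator is a function of $y_{<t}$ only. The vector $\bm{g}_t$ is also a function of $y_{<t}$.

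For each fixed $t$, I will condition on $y_{<t}$ using the tower property:
\begin{equation*}
\mathbb{E}\bigl[\mathbb{I}_{\{\tau\geq t\}}p(y_t\mid y_{<t})\bigr]=\mathbb{E}\Bigl[\mathbb{I}_{\{\tau\geq t\}}\,\mathbb{E}_{y_t}\bigl[p(y_t\mid y_{<t})\mid y_{<t}\bigr]\Bigr].
\end{equation*}
The inner conditional expectation is computed directly:
\begin{equation*}
\mathbb{E}_{y_t}\bigl[p(y_t\mid y_{<t})\mid y_{<t}\bigr]=\sum_{y\in\mathcal{V}}p(y\mid y_{<t})^2=\|\bm{g}_t\|_2^2 .
\end{equation*}
Summing over $t$ and using linearity gives the required identity, and substituting it into \eqref{eq:M-square-formula-terminate} yields \eqref{eq:M-square-formula-terminate-easy}. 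Interchanging the finite sum and the expectation is harmless because every term is bounded in $[0,1]$.

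The main point needing care is the predictability step: the indicator $\mathbb{I}_{\{\tau\ge t\}}$ must be $\mathcal{F}_{t-1}$-measurable rather than merely $\mathcal{F}_t$-measurable, so that it can be pulled out of the conditional expectation over $y_t$. This holds for $t\ge 2$ because $\{\tau\le t-1\}\in\mathcal{F}_{t-1}$. For $t=1$ the event $\{\tau\ge1\}$ is the whole sample space, and $\mathcal{F}_0$ is read as the trivial $\sigma$-algebra, with $\bm{g}_1$ depending only on $\bm{X}$.

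A second detail concerns what happens after [EOS]. If tokens are conceptually still drawn after [EOS] up to $l_{\mathrm{max}}$, those terms are killed by the indicator. So the argument never relies on the model's behaviour past $\tau$, and the expectation over $y_{\leq l_{\mathrm{max}}}$ is consistent with the statement.
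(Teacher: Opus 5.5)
Your proposal is correct and follows essentially the same route as the paper: starting from Theorem~\ref{thm:stoptime-calculate}, you rewrite $\sum_{t=1}^{\tau}$ as $\sum_{t=1}^{l_{\mathrm{max}}}\mathbb{I}_{\{\tau\geq t\}}(\cdot)$ and use that both the indicator and $\bm{g}_t$ are functions of $y_{<t}$, so that $\mathbb{E}_{y_t}[p(y_t\mid y_{<t})\mid y_{<t}]=\|\bm{g}_t\|_2^2$. The difference is only in presentation: the paper does this conditioning with explicit sums over $\mathcal{V}^{t-1}$ and the identity $p(y_{<t})p^2(y_t\mid y_{<t})=p(y_{\leq t})p(y_t\mid y_{<t})$, whereas you use the tower property and state the $\mathcal{F}_{t-1}$-measurability of $\{\tau\geq t\}$ explicitly, which the paper relies on only implicitly.
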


Compared with equations \eqref{eq:M-square-formula} and \eqref{eq:M-square-formula-terminate}, equations \eqref{eq:M-square-formula-easy} and \eqref{eq:M-square-formula-terminate-easy} do not require knowledge of every component of $\bm{g}_t$. They only need the actually generated tokens $y_{\leq t}$ and their associated conditional probabilities $p(y_t\mid y_{<t})$, which are usually accessible from typical LLM inference APIs. In addition, their computation does not require squaring and summing over $|\mathcal{V}|$ components, but only scalar summations. This suggests that equations \eqref{eq:M-square-formula-easy} and \eqref{eq:M-square-formula-terminate-easy} are both more computationally efficient and easier to implement in practice.

\subsection{Second-Order Moment and Stability of Model Outputs}
In fact, the second order moment of martingales is closely tied to the stability of LLM outputs. Here we restrict our attention to syntactic stability, which refers to the similarity of generated texts across repeated generating.
\begin{theorem}
    The second-oreder moment of martingale (with stopping time) can be expressed as follows:
    \begin{gather}
        \mathbb{E}_{y_{\leq t}}[\|\bm{M}_t\|_2^2]=\mathbb{E}_{y_{\leq t}}\left[\sum_{i=1}^t \bm{e}_i^2\right],\label{eq:M-error}\\
        \mathbb{E}_{y_{\leq l_{\mathrm{max}}}}[\|\bm{M}_{\tau}\|_2^2]=\mathbb{E}_{y_{\leq l_{\mathrm{max}}}}\left[\sum_{t=1}^{\tau} \bm{e}_t^2\right].\label{eq:M-error-terminate}
    \end{gather}
    \label{thm:M-error}
\end{theorem}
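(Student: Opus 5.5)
This is the standard orthogonality of martingale increments, where $\bm{e}_i^2$ is read as $\|\bm{e}_i\|_2^2$. The plan is to expand $\|\bm{M}_t\|_2^2=\sum_{i=1}^t\|\bm{e}_i\|_2^2+2\sum_{1\le i<j\le t}\langle \bm{e}_i,\bm{e}_j\rangle$ and show that every cross term has zero expectation.

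\textbf{Cross terms.} Fix $i<j$. The vector $\bm{e}_i=\bm{w}_i-\bm{g}_i$ depends only on $y_{\le i}$, so it is $\mathcal{F}_{j-1}$-measurable. Since $\mathbb{E}_{y_j}[\bm{w}_j\mid y_{<j}]=\bm{g}_j$, we have $\mathbb{E}_{y_j}[\bm{e}_j\mid y_{<j}]=0$. Conditioning on $y_{<j}$ and using the tower property gives
\begin{equation*}
\mathbb{E}[\langle \bm{e}_i,\bm{e}_j\rangle]=\mathbb{E}_{y_{<j}}\bigl[\langle \bm{e}_i,\mathbb{E}_{y_j}[\bm{e}_j\mid y_{<j}]\rangle\bigr]=0 .
\end{equation*}
Everything is bounded, since $\|\bm{e}_i\|_2\le\sqrt2$, and the vocabulary is finite, so all expectations exist and no integrability issue arises. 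This already yields \eqref{eq:M-error}. It is also the computation that, after evaluating $\mathbb{E}_{y_i}[\|\bm{e}_i\|_2^2\mid y_{<i}]=1-\|\bm{g}_i\|_2^2$, underlies Theorem~\ref{thm:raw-calculate}.

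\textbf{Stopping-time version.} Write $\bm{M}_\tau=\sum_{t=1}^{l_{\max}}\mathbb{I}_{\{\tau\ge t\}}\bm{e}_t$. The key observation is that $\{\tau\ge t\}=\{\tau\le t-1\}^c\in\mathcal{F}_{t-1}$, so each indicator is predictable. Hence $\bm{e}'_t:=\mathbb{I}_{\{\tau\ge t\}}\bm{e}_t$ is again a martingale difference sequence:
\begin{equation*}
\mathbb{E}_{y_t}[\bm{e}'_t\mid y_{<t}]=\mathbb{I}_{\{\tau\ge t\}}\,\mathbb{E}_{y_t}[\bm{e}_t\mid y_{<t}]=0 .
\end{equation*}
Repeating the cross-term argument for the stopped sequence, with $t=l_{\max}$, gives
\begin{equation*}
\mathbb{E}\|\bm{M}_\tau\|_2^2=\sum_{t=1}^{l_{\max}}\mathbb{E}\bigl[\mathbb{I}_{\{\tau\ge t\}}\|\bm{e}_t\|_2^2\bigr]=\mathbb{E}\Bigl[\sum_{t=1}^{\tau}\|\bm{e}_t\|_2^2\Bigr],
\end{equation*}
which is \eqref{eq:M-error-terminate}.

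\textbf{Main obstacle.} The only delicate point is the bookkeeping after [EOS]. Tokens $y_t$ with $t>\tau$ may be undefined or arbitrary in practice. The indicator formulation handles this: we only need some extension of the process with $\mathbb{E}[\bm{e}_t\mid\mathcal{F}_{t-1}]=0$, and the stopped terms vanish regardless of how that extension is chosen. Measurability of $\{\tau\ge t\}$ with respect to $\mathcal{F}_{t-1}$ follows from the definition of $\tau$ as the first [EOS] time capped at $l_{\max}$.
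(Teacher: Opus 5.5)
Your proof is correct, but it is organized differently from the paper's. The paper does not expand $\|\bm{M}_t\|_2^2$ into diagonal and cross terms at this point. Instead it evaluates each diagonal term in closed form, $\mathbb{E}[\|\bm{e}_i\|_2^2]=\mathbb{E}_{y_{<i}}[1-\|\bm{g}_i\|_2^2]$ and $\mathbb{E}[\mathbb{I}_{\{\tau\ge t\}}\|\bm{e}_t\|_2^2]=\mathbb{E}_{y_{<t}}[\mathbb{I}_{\{\tau\ge t\}}(1-\|\bm{g}_t\|_2^2)]$. It then matches these against the right-hand sides already established in Theorems~\ref{thm:raw-calculate} and~\ref{thm:stoptime-calculate}. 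Both sides are thus shown equal to a common third quantity. The vanishing of cross terms is inherited from those earlier proofs, where it appears as $2\bm{M}_{t-1}\cdot\mathbb{E}[\bm{e}_t\mid y_{<t}]=0$ in a step-by-step recursion. In the stopped case it comes from a backward iteration of conditional expectations over $y_{l_{\mathrm{max}}},y_{l_{\mathrm{max}}-1},\dots$.

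Your argument uses orthogonality of martingale increments, plus the observation that $\mathbb{I}_{\{\tau\ge t\}}$ is $\mathcal{F}_{t-1}$-measurable, so the stopped increments are again martingale differences. It is self-contained and never uses the one-hot-minus-probability structure; it is just the Pythagorean identity for a square-integrable martingale with a bounded stopping time. It also reverses the paper's logical order: Theorems~\ref{thm:raw-calculate} and~\ref{thm:stoptime-calculate} become corollaries once the conditional second moment $1-\|\bm{g}_t\|_2^2$ is computed.

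The paper's version is shorter on the page only because it leans on those earlier results; in return, it exhibits the explicit value of each summand. Its stopped computation silently relies on the same predictability fact you isolate, when it pulls $\mathbb{I}_{\{\tau\ge t\}}$ outside $\mathbb{E}_{y_t}[\cdot\mid y_{<t}]$.
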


Theorem~\ref{thm:M-error} implies that the second order moment of martingales captures the expected sum of squared errors at each step, which essentially quantifies the uncertainty in LLM outputs. The larger this value, the less reproducible the model is, and vice versa. It equals zero if and only if the error at every step is zero---that is, the model's generation is entirely determined by the given input and parameters.

The following theorem further characterizes the relationship between the second order moment of martingales and the output stability of the model.

\begin{theorem}
    For the second order moments $\mathbb{E}_{y_{\leq t}}[\|\bm{M}_t\|_2^2]$ and $\mathbb{E}_{y_{\leq l_{\mathrm{max}}}}[\|\bm{M}_T\|_2^2]$, the following relation holds:
    \begin{gather}
         \sum_{i=1}^t\mathrm{Var}(\bm{w}_i)=\mathbb{E}_{y_{\leq t}}[\|\bm{M}_t\|_2^2]+\sum_{i=1}^t\mathrm{Var}(\bm{g}_i),\label{eq:M-decompose}\\
         \begin{gathered}
             \mathbb{E}_{y_{\leq l_{\mathrm{max}}}}\left[\sum_{t=1}^{\tau}\mathrm{Var}(\bm{w}_t)\right]\\
             =\mathbb{E}_{y_{\leq l_\mathrm{max}}}[\|\bm{M}_{\tau}\|_2^2]+\mathbb{E}_{y_{\leq l_{\mathrm{max}}}}\left[\sum_{t=1}^{\tau}\mathrm{Var}(\bm{g}_t)\right].
         \end{gathered}
         \label{eq:M-decompose-terminate}
    \end{gather}

    Then
    \begin{gather}
        \sum_{i=1}^t\mathrm{Var}(\bm{w}_i)\geq \mathbb{E}_{y_{\leq t}}[\|\bm{M}_t\|_2^2],\label{eq:M-lower-bound}\\
         \mathbb{E}_{y_{\leq l_{\mathrm{max}}}}\left[\sum_{t=1}^{\tau}\mathrm{Var}(\bm{w}_t)\right]\geq\mathbb{E}_{y_{\leq l_\mathrm{max}}}[\|\bm{M}_{\tau}\|_2^2].\label{eq:M-lower-bound-terminate}
    \end{gather}
    \label{thm:var-decompose}
\end{theorem}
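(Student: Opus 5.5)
The plan is to read $\mathrm{Var}(\bm{w}_i)$ and $\mathrm{Var}(\bm{g}_i)$ as total (trace) variances of random vectors under the full decoding law, $\mathrm{Var}(\bm{v})=\mathbb{E}\|\bm{v}-\mathbb{E}\bm{v}\|_2^2$, and then prove \eqref{eq:M-decompose} with the law of total variance. The vector $\bm{g}_i$ is the conditional mean of $\bm{w}_i$ given $\mathcal{F}_{i-1}$. By the tower property, $\mathbb{E}[\bm{w}_i]=\mathbb{E}[\bm{g}_i]=:\bm{m}_i$. We split
\[
\bm{w}_i-\bm{m}_i=\bm{e}_i+(\bm{g}_i-\bm{m}_i)
\]
and expand the squared norm. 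The cross term $2\,\mathbb{E}\langle \bm{e}_i,\bm{g}_i-\bm{m}_i\rangle$ vanishes: we condition on $\mathcal{F}_{i-1}$, note that $\bm{g}_i-\bm{m}_i$ is $\mathcal{F}_{i-1}$-measurable, and use $\mathbb{E}[\bm{e}_i\mid\mathcal{F}_{i-1}]=0$, which is the martingale-difference property underlying the martingale theorem above. This yields the per-step identity
\[
\mathrm{Var}(\bm{w}_i)=\mathbb{E}\|\bm{e}_i\|_2^2+\mathrm{Var}(\bm{g}_i).
\]
Summing over $i\le t$ and replacing $\sum_i\mathbb{E}\|\bm{e}_i\|_2^2$ by $\mathbb{E}\|\bm{M}_t\|_2^2$ via \eqref{eq:M-error} of Theorem~\ref{thm:M-error} gives \eqref{eq:M-decompose}.

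For the stopping-time version \eqref{eq:M-decompose-terminate}, the main obstacle is giving $\mathbb{E}[\sum_{t=1}^{\tau}\mathrm{Var}(\bm{w}_t)]$ a rigorous meaning, since the quantity being summed is random only through the upper limit $\tau$. I would interpret it as
\[
\sum_{t=1}^{l_{\max}}\mathbb{E}\bigl[\mathbb{I}_{\{\tau\ge t\}}\|\bm{w}_t-\bm{m}_t\|_2^2\bigr],
\]
and interpret the $\bm{g}$ term analogously. The key fact is $\{\tau\ge t\}=\{\tau\le t-1\}^c\in\mathcal{F}_{t-1}$, so the indicator is predictable. The cross term $\mathbb{E}[\mathbb{I}_{\{\tau\ge t\}}\langle\bm{e}_t,\bm{g}_t-\bm{m}_t\rangle]$ then still vanishes after conditioning on $\mathcal{F}_{t-1}$. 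Summing over $t\le l_{\max}$, which is a finite sum because $\tau\le l_{\max}$, gives
\[
\mathbb{E}\Bigl[\sum_{t\le\tau}\|\bm{w}_t-\bm{m}_t\|_2^2\Bigr]=\mathbb{E}\Bigl[\sum_{t\le\tau}\|\bm{e}_t\|_2^2\Bigr]+\mathbb{E}\Bigl[\sum_{t\le\tau}\|\bm{g}_t-\bm{m}_t\|_2^2\Bigr].
\]
The middle term equals $\mathbb{E}\|\bm{M}_\tau\|_2^2$ by \eqref{eq:M-error-terminate}.

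If one instead prefers the conditional reading, in which $\mathrm{Var}(\bm{w}_t)$ inside the expectation means the variance given $\mathcal{F}_{t-1}$, the argument does not survive: that conditional variance is exactly $1-\|\bm{g}_t\|_2^2$, and $\mathrm{Var}(\bm{g}_t)$ would then be zero. This degeneracy is why I commit to the unconditional, centered-at-$\bm{m}_t$ interpretation above.

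The inequalities \eqref{eq:M-lower-bound} and \eqref{eq:M-lower-bound-terminate} follow immediately, because the $\mathrm{Var}(\bm{g})$ terms are expectations of squared norms and hence nonnegative.
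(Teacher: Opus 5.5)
\textbf{Verdict: correct, same route as the paper at fixed $t$, a different (equally valid) reading of the stopped identity.}

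\textbf{Fixed $t$.} Your argument is the paper's. The orthogonal split $\bm{w}_i-\bm{m}_i=\bm{e}_i+(\bm{g}_i-\bm{m}_i)$ with a vanishing cross term is exactly the law of total variance the paper invokes. Both then sum over $i$ and apply \eqref{eq:M-error}.

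\textbf{Stopped identity.} Here you genuinely diverge, because you and the paper resolve the ambiguous notation differently. The paper applies the law of total variance to the stopped vector $\mathbb{I}_{\{\tau\ge t\}}\bm{w}_t$, whose conditional mean given $\mathcal{F}_{t-1}$ is $\mathbb{I}_{\{\tau\ge t\}}\bm{g}_t$. This gives $\mathrm{Var}(\mathbb{I}_{\{\tau\ge t\}}\bm{w}_t)=\mathbb{E}[\mathbb{I}_{\{\tau\ge t\}}\|\bm{e}_t\|_2^2]+\mathrm{Var}(\mathbb{I}_{\{\tau\ge t\}}\bm{g}_t)$. So the paper silently reads $\mathbb{E}[\sum_{t=1}^{\tau}\mathrm{Var}(\bm{w}_t)]$ as $\sum_{t=1}^{l_{\max}}\mathrm{Var}(\mathbb{I}_{\{\tau\ge t\}}\bm{w}_t)$. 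That quantity is centered at $\mathbb{E}[\mathbb{I}_{\{\tau\ge t\}}\bm{w}_t]$ and also charges the event $\{\tau<t\}$, where the stopped vector is zero.

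You instead center at the unstopped mean $\bm{m}_t$ and integrate only over $\{\tau\ge t\}$. Both identities are true, and both rest on the same two facts: $\{\tau\ge t\}\in\mathcal{F}_{t-1}$ and $\mathbb{E}[\bm{e}_t\mid\mathcal{F}_{t-1}]=0$. However, the left-hand sides and the $\bm{g}$-terms differ in general, so you have proved a different version of \eqref{eq:M-decompose-terminate}, not the paper's.

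\textbf{What each reading buys.} Yours is literally the expectation of a stopped sum of per-step squared deviations, so it matches the $\mathbb{E}[\sum_{t=1}^{\tau}\cdot]$ notation. You also state the interpretation explicitly, which the paper never does. The paper's keeps every summand an honest variance of a random vector, so the law of total variance can be quoted verbatim, at the cost of also counting the randomness of whether step $t$ occurs at all.

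\textbf{Two corrections to your framing.}
\begin{itemize}
\item The literal reading, $\sum_{t}P(\tau\ge t)\,\mathrm{Var}(\bm{w}_t)$, is perfectly well defined. The real reason to abandon it is that the identity then generally fails: the discrepancy is $-\sum_t \mathrm{Cov}(\mathbb{I}_{\{\tau\ge t\}},\|\bm{e}_t\|_2^2)$, which need not vanish.
\item Under the conditional reading the identity does not break. It collapses to Theorem~\ref{thm:stoptime-calculate} with a zero $\bm{g}$-term, which is degenerate but true.
\end{itemize}
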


Theorem~\ref{thm:var-decompose} states that the sum of variances of $\bm{w}_i$ admits a decomposition into two components: the second order moment of the martingale and the sum of variances of $\bm{g}_i$, and the second order moment serves as its lower bound. Specifically, the second order moment measures the randomness within the same output sequence, while the sum of variances of $\bm{g}_i$ measures the randomness across different output sequences. It can be shown that minimizing the second order moment of the martingale is equivalent to minimizing the sum of variances of $\bm{w}_i$.

\begin{theorem}
    The following equivalences hold:
    \begin{gather}
        \sum_{i=1}^t\mathrm{Var}(\bm{w}_i)=0\iff \mathbb{E}_{y_{\leq t}}[\|\bm{M}_t\|_2^2]=0\label{eq:M-zero-error},\\
        \begin{gathered}
           \mathbb{E}_{y_{\leq l_{\mathrm{max}}}}\left[\sum_{t=1}^{\tau}\mathrm{Var}(\bm{w}_t)\right]=0\iff \\
           \mathbb{E}_{y_{\leq l_\mathrm{max}}}[\|\bm{M}_{\tau}\|_2^2]=0. 
        \end{gathered}
        \label{eq:M-zero-error-terminate}
    \end{gather}
    \label{thm:M-var-zero}
\end{theorem}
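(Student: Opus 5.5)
The plan is to derive both equivalences directly from the decomposition in Theorem~\ref{thm:var-decompose}. The argument rests on the fact that every term in those identities is nonnegative, so a sum of nonnegative terms vanishes exactly when each term does. Here $\mathrm{Var}(\bm{v})$ is read as the total variance $\mathbb{E}\|\bm{v}-\mathbb{E}\bm{v}\|_2^2$ (the trace of the covariance), which is always nonnegative. The direction ``$\Rightarrow$'' then follows immediately from \eqref{eq:M-lower-bound} and \eqref{eq:M-lower-bound-terminate}. If the left-hand side is zero, the nonnegative second-order moment is squeezed to zero. The stopping-time case is identical, using \eqref{eq:M-lower-bound-terminate} together with the nonnegativity of $\|\bm{M}_\tau\|_2^2$.

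The substance lies in ``$\Leftarrow$''. Assume $\mathbb{E}_{y_{\leq t}}[\|\bm{M}_t\|_2^2]=0$. Theorem~\ref{thm:M-error} rewrites this as $\mathbb{E}\bigl[\sum_{i=1}^t\|\bm{e}_i\|_2^2\bigr]=0$, so $\bm{e}_i=0$ almost surely for every $i\le t$. Equivalently, by Theorem~\ref{thm:raw-easy-calculate}, $p(y_i\mid y_{<i})=1$ almost surely. Thus along every history that occurs with positive probability, the conditional distribution $\bm{g}_i$ is a one-hot vector. An induction on $i$ then shows that there is a single deterministic sequence $y_1^\ast,\dots,y_t^\ast$ with $P(y_{\le t}=y^\ast_{\le t})=1$.
\begin{itemize}
\item The base case uses that $\bm{g}_1$ is nonrandom and one-hot.
\item The inductive step uses that, given the almost-sure prefix $y^\ast_{<i}$, the one-hot vector $\bm{g}_i$ is itself a fixed vector.
\end{itemize}
Consequently each $\bm{w}_i$ is almost surely constant, so $\mathrm{Var}(\bm{w}_i)=0$. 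The same holds for each $\bm{g}_i$, which gives a consistent check with \eqref{eq:M-decompose}.

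For the stopping-time version, I will run the same argument on the event $\{t\le\tau\}$. Write $\sum_{t=1}^{\tau}\|\bm{e}_t\|_2^2=\sum_{t=1}^{l_{\max}}\mathbb{I}_{\{t\le\tau\}}\|\bm{e}_t\|_2^2$ and use \eqref{eq:M-error-terminate}. This yields $\bm{e}_t=0$ almost surely on $\{t\le\tau\}$. Here $\{t\le\tau\}\in\mathcal{F}_{t-1}$, so this is again a statement about conditional distributions given the history. The same induction shows that the trajectory up to $\tau$ is almost surely a fixed sequence, so that $\tau$ itself is almost surely constant. It follows that $\mathbb{E}\bigl[\sum_{t=1}^{\tau}\mathrm{Var}(\bm{w}_t)\bigr]=0$.

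The main obstacle is interpretive rather than computational. The paper writes $\mathrm{Var}(\bm{w}_t)$ inside an expectation over the random horizon $\tau$, so $\mathrm{Var}(\bm{w}_t)$ must mean either an unconditional or a conditional variance. I will first try the conditional reading, $\mathrm{Var}(\bm{w}_t\mid\mathcal{F}_{t-1})=1-\|\bm{g}_t\|_2^2$. Under that reading the ``$\Leftarrow$'' direction is immediate from Theorem~\ref{thm:stoptime-calculate}, since its right-hand side then coincides with $\mathbb{E}\bigl[\sum_{t=1}^{\tau}\mathrm{Var}(\bm{w}_t\mid\mathcal{F}_{t-1})\bigr]$. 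If the unconditional reading is intended instead, the induction argument above supplies the needed degeneracy.
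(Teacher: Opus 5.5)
Your proposal is correct and is essentially the paper's proof. You get ``$\Rightarrow$'' from the lower bounds \eqref{eq:M-lower-bound} and \eqref{eq:M-lower-bound-terminate}, and ``$\Leftarrow$'' by using Theorem~\ref{thm:M-error} to force $\bm{e}_i=0$ almost surely, hence one-hot $\bm{g}_i$ and a single trajectory of probability one, with the stopping-time case handled on the events $\{\tau\geq t\}$. Your explicit induction and your discussion of how to read $\mathrm{Var}(\bm{w}_t)$ inside the random sum make precise what the paper leaves informal.

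The one caveat concerns the literal unconditional reading. There, \eqref{eq:M-lower-bound-terminate} is not what the paper's proof of Theorem~\ref{thm:var-decompose} establishes, since that proof works with $\mathrm{Var}(\mathbb{I}_{\{\tau\geq t\}}\bm{w}_t)$. The literal bound can in fact fail, because $\mathrm{Var}(\bm{w}_t)$ then also averages over trajectories that have already stopped. So under that reading you should obtain ``$\Rightarrow$'' by running your induction forward from $\mathrm{Var}(\bm{w}_1)=0$, which the hypothesis forces since $P(\tau\geq 1)=1$.
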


With Theorem~\ref{thm:M-var-zero}, we naturally ask whether the maximization of the second order moment of the martingale is equivalent to the maximization of the sum of variances of $\bm{w}_i$. The answer is negative. We have the following theorem:

\begin{theorem}
    The following equivalences hold:
    \begin{gather}
        \begin{gathered}
            \sum_{i=1}^t\mathrm{Var}(\bm{w}_i)=t(1-\frac{1}{|\mathcal{V}|})\iff\\
            \forall 1\leq i\leq t:\mathbb{E}_{y_{< i}}[\bm{g}_i]=\frac{1}{|\mathcal{V}|}\mathbf{1},
        \end{gathered}\\
        \begin{gathered}
            \mathbb{E}_{y_{\leq t}}[\|\bm{M}_t\|_2^2]=t(1-\frac{1}{|\mathcal{V}|})\iff \\
            \forall 1\leq i\leq t:(\bm{g}_i=\frac{1}{|\mathcal{V}|}\mathbf{1}\ \mathrm{a.e.}),
        \end{gathered}
    \end{gather}
    where $\mathbf{1}$ is the all-one vector.
    
    The following implication hold:
    \begin{equation}
        \begin{gathered}
            \mathbb{E}_{y_{\leq t}}[\|\bm{M}_t\|_2^2]=t(1-\frac{1}{|\mathcal{V}|})\Rightarrow\\
            \sum_{i=1}^t\mathrm{Var}(\bm{w}_i)=t(1-\frac{1}{|\mathcal{V}|}).
        \end{gathered}
        \label{eq:entail-for-M-mi-var-max}
    \end{equation}
\end{theorem}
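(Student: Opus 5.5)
We interpret $\mathrm{Var}(\bm{v})$ for a random vector as the total variance $\mathbb{E}\|\bm{v}-\mathbb{E}\bm{v}\|_2^2$. This is the reading under which Theorem~\ref{thm:var-decompose} holds, since it gives $\mathrm{Var}(\bm{w}_i)=\mathbb{E}\|\bm{e}_i\|^2+\mathrm{Var}(\bm{g}_i)$ by the law of total variance: $\bm{e}_i$ has zero conditional mean given $\mathcal{F}_{i-1}$ and is therefore orthogonal to any function of $y_{<i}$. The plan is to compute the two sums explicitly in terms of $\bm{g}_i$, bound each by $t(1-1/|\mathcal{V}|)$ using Cauchy--Schwarz, and read off the equality cases.

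\textbf{Step 1: the variance sum.} Since $\bm{w}_i$ is one-hot, $\|\bm{w}_i\|^2=1$. Writing $\bar{\bm{g}}_i:=\mathbb{E}[\bm{w}_i]=\mathbb{E}_{y_{<i}}[\bm{g}_i]$, we get $\mathrm{Var}(\bm{w}_i)=1-\|\bar{\bm{g}}_i\|^2$. The vector $\bar{\bm{g}}_i$ is a probability vector on $\mathcal{V}$, so Cauchy--Schwarz gives $\|\bar{\bm{g}}_i\|^2\geq (\mathbf{1}^\top\bar{\bm{g}}_i)^2/|\mathcal{V}|=1/|\mathcal{V}|$, with equality iff $\bar{\bm{g}}_i=\frac1{|\mathcal{V}|}\mathbf{1}$. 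Each summand is therefore at most $1-1/|\mathcal{V}|$. The sum attains $t(1-1/|\mathcal{V}|)$ iff every summand attains its maximum, which gives the first equivalence.

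\textbf{Step 2: the second-order moment.} By Theorem~\ref{thm:raw-calculate}, $\mathbb{E}\|\bm{M}_t\|^2=\sum_i\mathbb{E}_{y_{<i}}[1-\|\bm{g}_i\|^2]$. Pointwise, $\bm{g}_i$ is a probability vector, so $1-\|\bm{g}_i\|^2\leq 1-1/|\mathcal{V}|$, with equality iff $\bm{g}_i=\frac1{|\mathcal{V}|}\mathbf{1}$. Hence each expectation is at most $1-1/|\mathcal{V}|$, and the total attains $t(1-1/|\mathcal{V}|)$ iff, for each $i$, the nonnegative random variable $(1-1/|\mathcal{V}|)-(1-\|\bm{g}_i\|^2)$ has zero expectation. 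That holds iff the variable vanishes almost surely, i.e. $\bm{g}_i=\frac1{|\mathcal{V}|}\mathbf{1}$ a.e. This is the second equivalence.

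\textbf{Step 3: the implication.} If $\bm{g}_i$ is uniform a.e., taking expectations gives $\bar{\bm{g}}_i=\frac1{|\mathcal{V}|}\mathbf{1}$, so the first equivalence yields the conclusion. As a shortcut, one can also combine \eqref{eq:M-lower-bound} with the upper bound from Step 1. The converse fails, for example when $\bm{g}_i$ is a random point mass whose location is uniformly distributed. That counterexample is only commentary, not part of the claimed statement.

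\textbf{Main obstacle.} The delicate point is bookkeeping rather than analysis. One must fix the meaning of $\mathrm{Var}$ for vectors as total variance, and distinguish the pointwise condition on $\bm{g}_i$ from the condition on its mean. ``A.e.'' is justified because the underlying space is discrete, so a.e.\ means on every history of positive probability. Everything else follows from the Cauchy--Schwarz equality case together with the fact that a nonnegative variable with zero mean vanishes almost surely.
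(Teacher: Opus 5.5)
Your proposal is correct and follows the paper's proof. Both arguments write $\mathrm{Var}(\bm{w}_i)=1-\|\mathbb{E}_{y_{<i}}[\bm{g}_i]\|_2^2$ and $\mathbb{E}\|\bm{M}_t\|_2^2=\sum_{i}(1-\mathbb{E}_{y_{<i}}\|\bm{g}_i\|_2^2)$, bound each summand by $1-1/|\mathcal{V}|$, read off the equality cases, and get the implication from ``uniform a.e.\ implies uniform mean.'' You additionally make explicit the Cauchy--Schwarz equality case and the ``nonnegative variable with zero mean vanishes a.s.'' step that the paper leaves implicit, and your shortcut via $\sum_i\mathrm{Var}(\bm{w}_i)\geq\mathbb{E}\|\bm{M}_t\|_2^2$ is also valid (a minor aside: in your converse example $\bm{g}_1$ is deterministic, so it must itself be uniform, and only the $\bm{g}_i$ with $i\geq 2$ can be randomly located point masses).
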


Counterexample to the converse of equation~\eqref{eq:entail-for-M-mi-var-max} is provided, as shown in Figure~\ref{fig:counterexample}.

\begin{figure}[htbp]
    \centering
    \begin{subfigure}{0.49\linewidth}
        \centering
        \begin{tikzpicture}[level/.style={level distance=15mm, sibling distance=10mm}]
            \node (root) {root}
                child { node {a(0.5)} 
                    child { node {a(0.7)} }
                    child { node {b(0.3)} }
                }
                child [missing] {}
                child { node {b(0.5)}
                    child { node {b(0.7)} }
                    child { node {a(0.3)} }
                };
        \end{tikzpicture}
        \caption{Maximize $\sum_{i=1}^t\mathrm{Var}(\bm{w}_i)$.}
        \label{fig:max-for-wi-var}
    \end{subfigure}
    \hfill
    \begin{subfigure}{0.49\linewidth}
        \begin{tikzpicture}[level/.style={level distance=15mm, sibling distance=10mm}]
            \node (root) {root}
                child { node {a(0.5)}
                    child { node {a(0.5)} }
                    child { node {b(0.5)} }
                }
                child [missing] {}
                child { node {b(0.5)}
                    child { node {b(0.5)} }
                    child { node {a(0.5)} }
                };
        \end{tikzpicture}
        \caption{Maximize $\mathbb{E}_{y_{\leq t}}[\|\bm{M}_t\|_2^2]$.}
        \label{fig:max-for-M}
    \end{subfigure}
    \caption{The case where $\mathbb{E}_{y_{\leq t}}[\|\bm{M}_t\|_2^2]$ isn't maximized when $\sum_{i=1}^t\mathrm{Var}(\bm{w}_i)$ is maximized.}
    \label{fig:counterexample}
\end{figure}

In Figures \ref{fig:max-for-wi-var} and \ref{fig:max-for-M}, each node has $|\mathcal{V}|$ child nodes, representing the tokens that can be generated at the next time step, with each child node associated with a selection probability $p(y_t\mid y_{<t})$.

For Figure~\ref{fig:max-for-wi-var}, we have 
\begin{equation}
    \begin{gathered}
        \mathbb{E}_{y_{\leq 2}}[\|\bm{M}_2\|_2^2]=0.92,\\
        \sum_{i=1}^2\mathrm{Var}(\bm{w}_2)=1.
    \end{gathered}\label{eq:cal-for-M-max-M}
\end{equation}

For Figure~\ref{fig:max-for-M}, we have
\begin{equation}
    \begin{gathered}
        \mathbb{E}_{y_{\leq 2}}[\|\bm{M}_2\|_2^2]=1,\\
        \sum_{i=1}^2\mathrm{Var}(\bm{w}_2)=1.
    \end{gathered}\label{eq:cal-for-mi-var-M-max-M}
\end{equation}

Comparing equation~\eqref{eq:cal-for-M-max-M} with equation~\eqref{eq:cal-for-mi-var-M-max-M}, we can observe that the second order moment of the martingale does not necessarily reach its maximum when the sum of the variances of $\bm{w}_i$ is maximized. Furthermore, it is intuitively expected that Figure~\ref{fig:max-for-M} exhibits greater randomness than Figure~\ref{fig:max-for-wi-var}. Nevertheless, the above calculation results show that the sum of the variances of $\bm{w}_i$ cannot reflect this discrepancy, whereas the second order moment of the martingale can. This suggests that, in contrast to the sum of the variances of $\bm{w}_i$, the second order moment of the martingale serves as a more effective measure of output stability.

\FloatBarrier

\section{Experiments}

\subsection{Experimental Setup}

We conduct white-box experiments (with full logit access) to empirically verify the theoretical properties of our martingale formulation. Computations are performed on an Ubuntu 22.04 system equipped with two NVIDIA RTX PRO 6000 GPUs, a 44-core Intel Xeon Platinum 8470Q CPU, 220GB RAM, Python 3.12, PyTorch 2.12.1, and CUDA 13.0.

By default, we use \textbf{Llama-3.2-3B-Instruct}. The decoding process uses pure stochastic sampling ($T=1.0$, top-$p=1.0$, top-$k=0$) with no repetition penalties or external constraints. The experiments are organized into four parts: (1) verification of the first-order martingale property, (2) verification of the derived second-order moment identity, (3) scaling and generality analysis, and (4) comparison with existing uncertainty measures.

\subsection{First-Order Martingale Verification}

According to Theorem~1 and equation~\eqref{eq:first-order},
the cumulative prediction error process $\mathbf{M}_t$
forms a martingale with zero first-order moment.

Therefore, we empirically examine whether the estimated cumulative prediction error
remains centered around zero during autoregressive generation.

To empirically verify this property, we perform a single stochastic generation sequence with Llama-3.2-3B-Instruct.
At each decoding step, the complete vocabulary-level error vector is computed.
Then, $K=300$ vocabulary dimensions are randomly sampled, and the empirical mean sequence is estimated as
$\hat{\mu}_t=\frac{1}{K}\sum_{j=1}^{K}M_{t,v_j}$.

If the martingale property holds, this sequence should remain centered around zero without systematic drift.

Figure~\ref{fig:first_order} shows the evolution of the empirical first-order moment.
Although individual vocabulary dimensions exhibit random fluctuations,
their averaged sequence remains close to zero throughout the generation process,
which agrees with the theoretical expectation.

\begin{figure}[htbp]
    \centering
    \includegraphics[width=\linewidth]{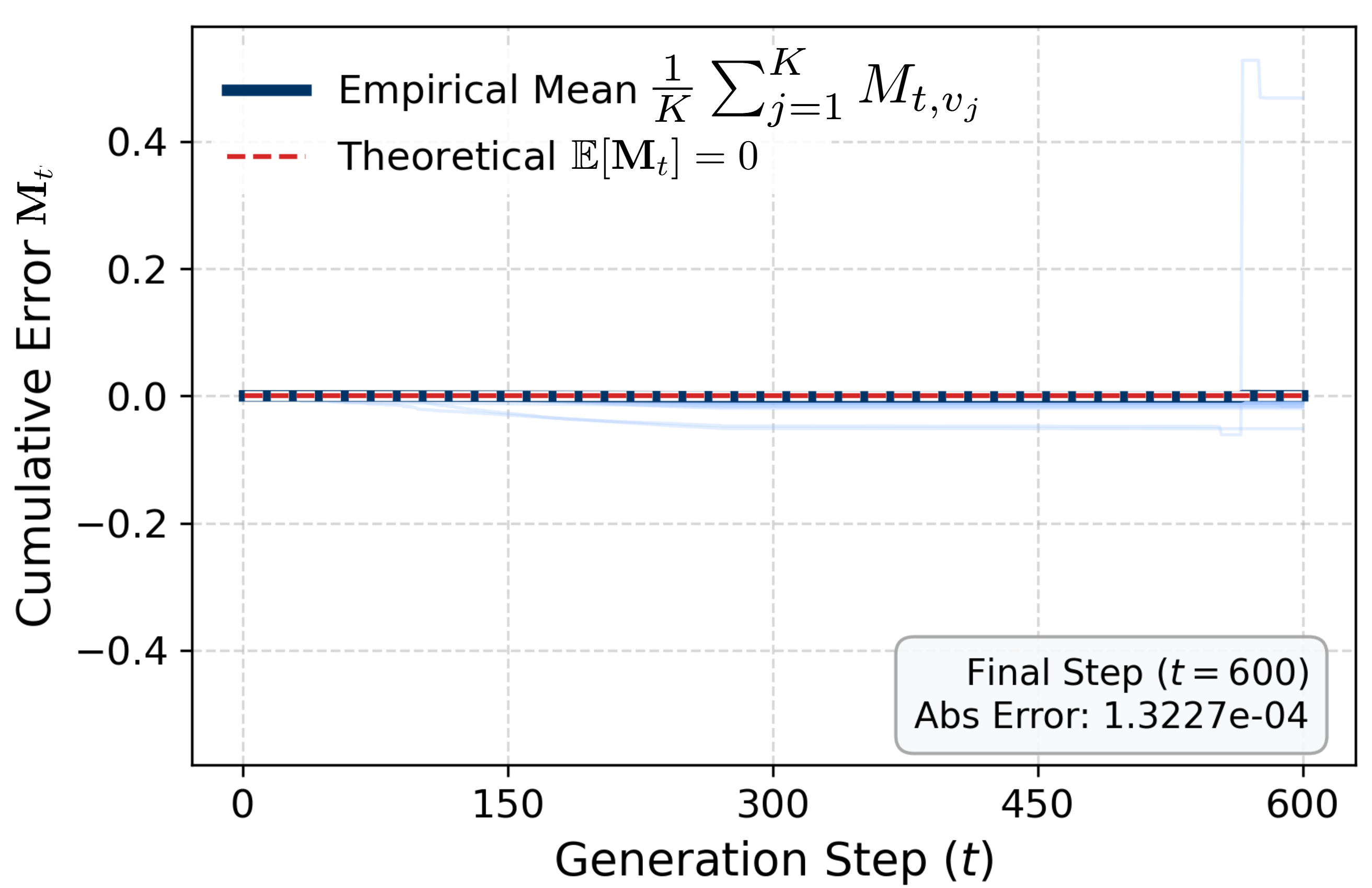}
    \caption{
    Empirical verification of the first-order martingale property on Llama-3.2-3B-Instruct.
    The sampled vocabulary dimensions exhibit random fluctuations,
    while their averaged cumulative error remains around the theoretical expectation
    $\mathbb{E}[\mathbf{M}_t]=0$.
    }
    \label{fig:first_order}
\end{figure}

\subsection{Second-Order Moment Identity Verification}

Beyond the first-order property, Theorem~\ref{thm:raw-easy-calculate}
and equation~\eqref{eq:M-square-formula-easy}
provide a tractable estimation of the second-order moment.

This result allows us to verify the theoretical quadratic variation
using only the probabilities of the actually generated tokens.

To verify this relationship, we conduct Monte Carlo sampling with $K=30$ independent generation sequences using Llama-3.2-3B-Instruct.
For each sequence, we compute the empirical quadratic variation
$\|\mathbf{M}_t\|_2^2$
and compare its Monte Carlo estimation with the theoretical expectation
given by equation~\eqref{eq:M-square-formula-easy}.

Figure~\ref{fig:second_order}
demonstrates that the empirical sequence closely follows the theoretical curve.
The consistency between both quantities verifies the derived quadratic variation identity.

\begin{figure}[htbp]
    \centering
    \includegraphics[width=\linewidth]{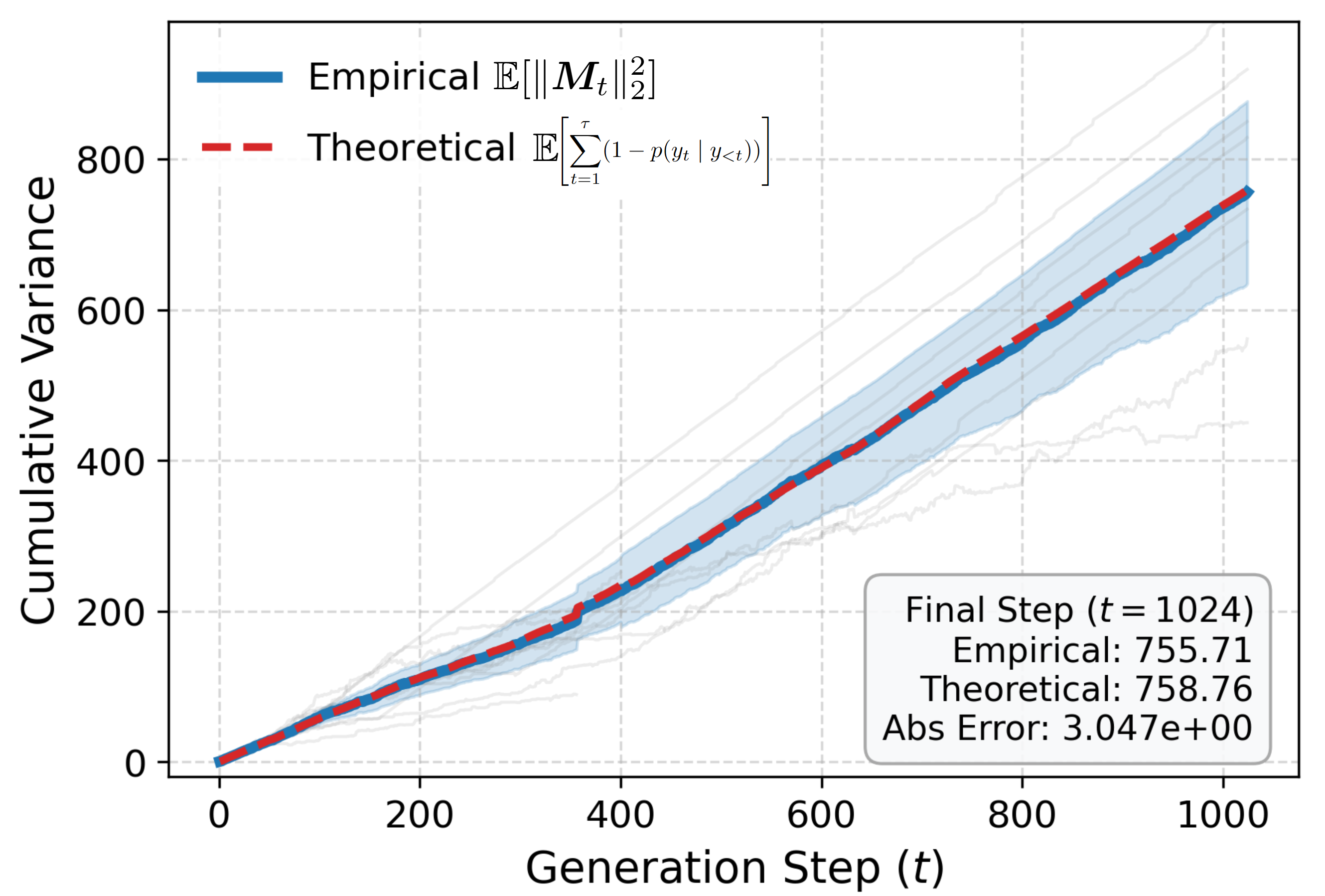}
    \caption{
    Verification of the second-order martingale identity on Llama-3.2-3B-Instruct.
    The empirical second-order moment closely matches the theoretical prediction
    throughout the generation sequence.
    }
    \label{fig:second_order}
\end{figure}

\begin{table}[t]
    \centering
    \begin{tabular*}{\linewidth}{@{\extracolsep{\fill}}cccc}
        \hline
        Model & Empirical & Theoretical & Error\\
        \hline
        Llama-1B &861.60&865.23&3.63\\
        Llama-3B&796.40&781.61&14.79\\
        Llama-8B&708.74&698.66&10.09\\
        Llama-70B&475.45&462.09&13.36\\
        \hline
        Qwen-1.5B&609.81&622.64&12.84\\
        Qwen-3B&434.98&436.80&1.83\\
        Qwen-7B&282.13&283.84&1.72\\
        Qwen-72B&210.35&214.05&3.70\\
        \hline
    \end{tabular*}
    \caption{
    Cross-model verification of the second-order martingale identity.
    The empirical second-order moment consistently agrees with the theoretical quadratic variation across different model scales.
    }
    \label{tab:scaling}
\end{table}

\subsection{Scaling and Generality Analysis}

To ensure these second-order behaviors generalize, we evaluate the Llama-3.x and Qwen2.5 families (1B to 72B parameters). Final-step statistics are summarized in Table~\ref{tab:scaling}, while Figure~\ref{fig:generality} presents complete trajectory comparisons. Across varying model scales, sampling temperatures, and prompt categories, the empirical moments consistently match theoretical predictions. This confirms that our derived identity characterizes an intrinsic property of autoregressive decoding rather than a model-specific artifact. 
Interestingly, under identical prompts, we observe that the averaged quadratic variation varies consistently across different models, hinting at its potential utility as a model-specific probabilistic fingerprint.

\begin{figure}[t]
\centering

\begin{subfigure}{0.48\linewidth}
    \centering
    \includegraphics[width=\linewidth]{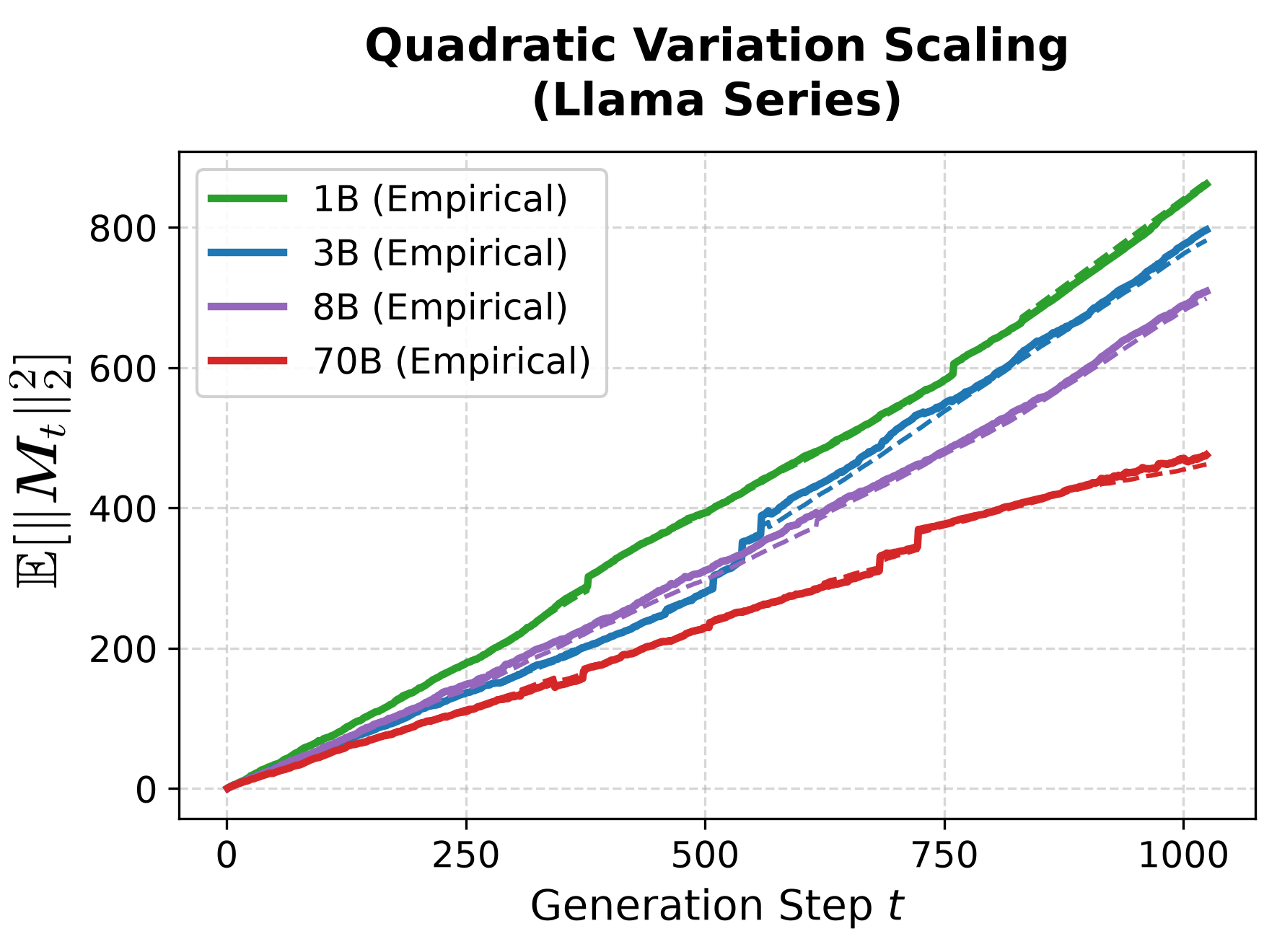}
    \caption{Scaling analysis on Llama models.}
\end{subfigure}
\hfill
\begin{subfigure}{0.48\linewidth}
    \centering
    \includegraphics[width=\linewidth]{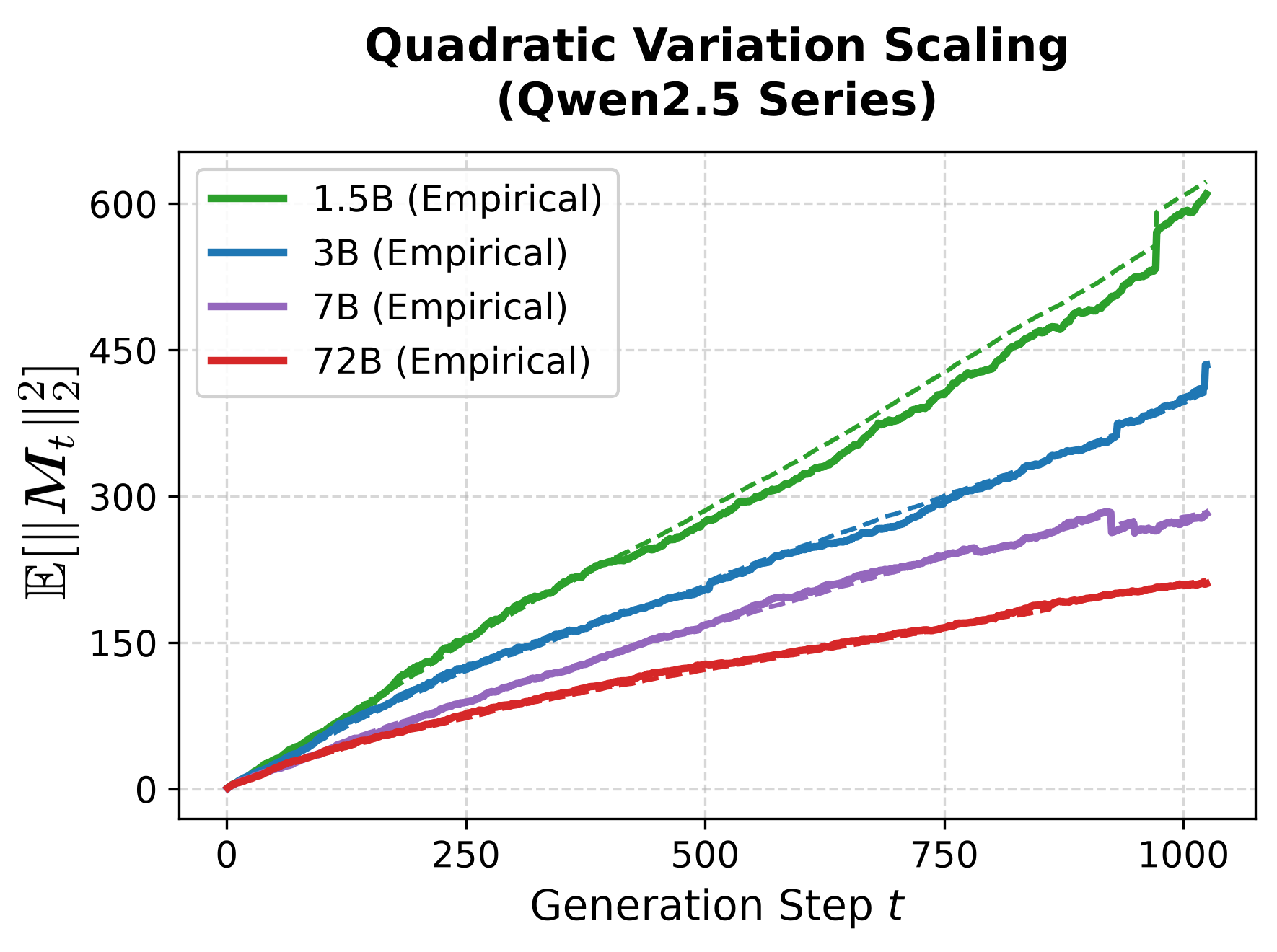}
    \caption{Scaling analysis on Qwen models.}
\end{subfigure}

\vspace{0.3cm}

\begin{subfigure}[t]{0.48\linewidth}
    \centering
    \includegraphics[width=\linewidth]{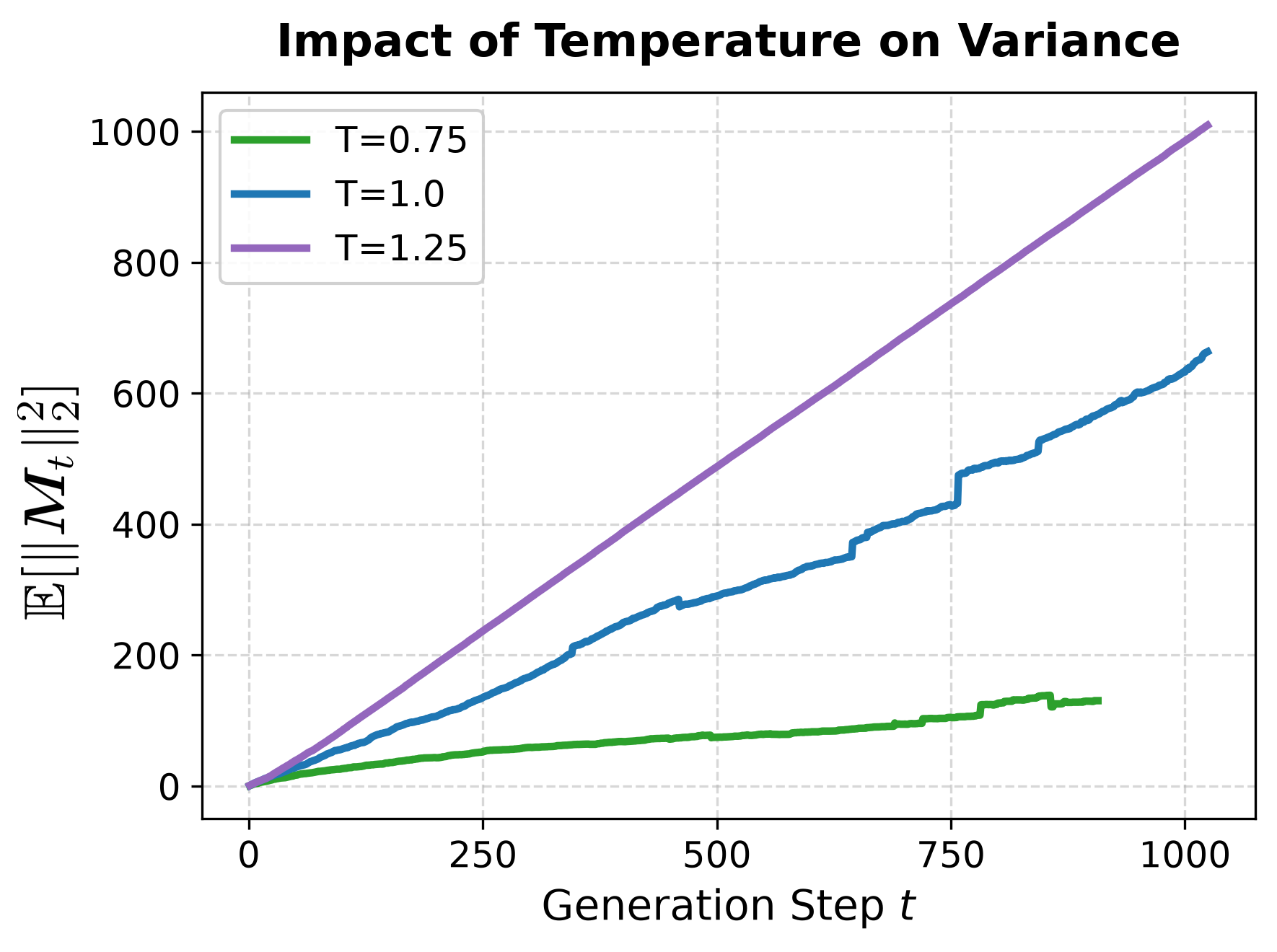}
    \caption{Effect of sampling temperature.}
\end{subfigure}
\hfill
\begin{subfigure}[t]{0.48\linewidth}
    \centering
    \includegraphics[width=\linewidth]{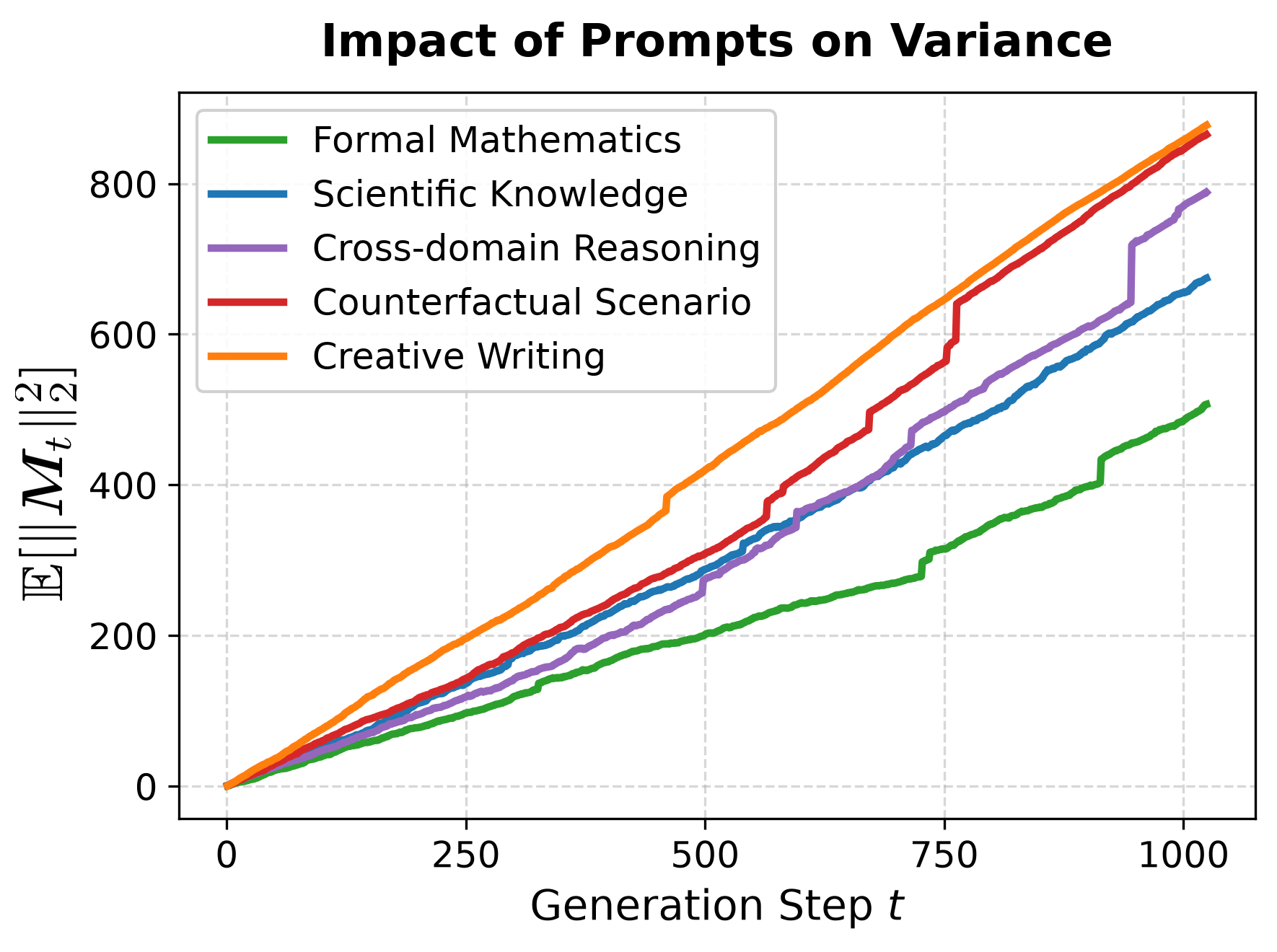}
    \caption{Effect of prompt variations.}
\end{subfigure}

\caption{
Generality analysis of the second-order martingale behavior.
The empirical second-order moments consistently agree with the theoretical
quadratic variation across different model scales, sampling temperatures,
and prompt categories.
}
\label{fig:generality}
\end{figure}

\subsection{Relationship with Existing Uncertainty Measures}

To interpret our derived second-order quantity, we compute its Pearson correlation against standard uncertainty proxies (predictive entropy, maximum probability, and perplexity) over 300 unconstrained trajectories from the RAID benchmark~\cite{dugan2024raid}. 

As Table~\ref{tab:uncertainty_corr} shows, the length-normalized second-order quantity (NSM) is highly correlated with conventional token-level metrics. Crucially, the non-normalized cumulative form (SM) maintains only a moderate correlation. This demonstrates that our proposed martingale metric captures distinct trajectory-level global drift and cumulative stochastic dynamics that are otherwise smoothed out by conventional step-averaged uncertainty measures.

\begin{table}[t]
\centering
\begin{tabular}{lccc}
\hline
 & Entropy & PPL & MaxProb\\
\hline
SM & 0.7203 & 0.7057 & 0.7018\\
NSM & 0.9889 & 0.9482 & 0.9950\\
\hline
\end{tabular}
\caption{
Pearson correlation between the martingale second-order moment (SM), its normalized form (NSM), and existing uncertainty measures.
}
\label{tab:uncertainty_corr}
\end{table}

The results show that the martingale second-order moment shares related information with existing uncertainty measures.
Interestingly, after normalization by sequence length, the second-order quantity becomes highly correlated with token-level uncertainty metrics.
This indicates that the cumulative form preserves additional sequence-level information beyond conventional averaged uncertainty measures.

\section{Conclusion and Discussion}
\label{sec:conclusion}
In this work, we characterize autoregressive LLM decoding as a rigorous martingale process. By defining per-step prediction errors as the difference between sampled indicators and conditional distributions, we prove that the cumulative error forms a zero-mean martingale and derive a tractable, closed-form identity for its predictable quadratic variation. Extensive white-box experiments confirm that empirical trajectories precisely track our theoretical predictions, proving these stochastic dynamics are intrinsic to autoregressive generation. 

While this paper primarily focuses on theoretical characterizations, our framework offers a robust mathematical foundation for interpreting model decoding dynamics. Future work will explore leveraging these martingale properties for trajectory-level uncertainty quantification, model identification, and adaptive decoding strategies.

\bibliography{aaai2027}

\appendix
\allowdisplaybreaks
\setcounter{theorem}{1}

\section*{A{\quad}Proof}
\begin{theorem}
The stochastic process $\{\bm{M}_t\}_{t \geq 1}$ is a martingale with respect to the filtration $\mathbb{F}$.
\end{theorem}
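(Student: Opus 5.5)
We verify the three defining properties of a martingale for the vector-valued process $\{\bm{M}_t\}_{t\geq 1}$, working componentwise: $\bm{M}_t$ is a martingale in $\mathbb{R}^{|\mathcal{V}|}$ exactly when each coordinate $M_{t,y}$, $y\in\mathcal{V}$, is a real-valued martingale with respect to $\mathbb{F}$. Throughout, the input $\bm{X}$ and the parameters $\bm{\theta}$ are held fixed. The underlying probability measure is the one induced by sampling $y_t\sim p(\cdot\mid y_{<t})$ sequentially. The argument has three steps, taken in order: adaptedness, integrability, and the martingale identity.

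\textbf{Adaptedness and integrability.} The indicator $\bm{w}_{t,y}=\mathbb{I}_{\{y=y_t\}}$ is a function of $y_t$ alone. The predictive vector $\bm{g}_{t,y}=p(y\mid y_{<t})$ is a deterministic function of $y_{<t}$. Hence $\bm{e}_t$ is $\mathcal{F}_t$-measurable; since $\mathcal{F}_t$ is the power set of $\mathcal{V}^t$, every function of $y_{\leq t}$ is measurable. Because $\mathcal{F}_i\subseteq\mathcal{F}_t$ for $i\leq t$, the sum $\bm{M}_t=\sum_{i\le t}\bm{e}_i$ is also $\mathcal{F}_t$-measurable. For integrability, each coordinate satisfies $|\bm{e}_{i,y}|\le 1$, since both $\bm{w}_{i,y}$ and $\bm{g}_{i,y}$ lie in $[0,1]$. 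Therefore $|M_{t,y}|\le t$, so $\mathbb{E}|M_{t,y}|\le t<\infty$; the same bound gives $\mathbb{E}\|\bm{M}_t\|_1\le 2t$.

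\textbf{Martingale identity.} Write $\bm{M}_{t+1}=\bm{M}_t+\bm{e}_{t+1}$. Since $\bm{M}_t$ is $\mathcal{F}_t$-measurable, it can be pulled out of the conditional expectation, so it suffices to show
\begin{equation*}
\mathbb{E}[\bm{e}_{t+1}\mid\mathcal{F}_t]=0 \quad \text{a.s.}
\end{equation*}
Fix a coordinate $y$. Conditioning on $\mathcal{F}_t$ means conditioning on the finite-valued history $y_{\le t}$, so the conditional expectation equals, on each atom $\{y_{\le t}=s\}$ of positive probability,
\begin{equation*}
\mathbb{E}[\bm{w}_{t+1,y}\mid y_{\le t}=s]=P(y_{t+1}=y\mid y_{\le t}=s)=p(y\mid s)=\bm{g}_{t+1,y}.
\end{equation*}
The first equality holds because the expectation of an indicator is the probability of its event. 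The second holds by the autoregressive sampling model, and this is the step where the sampling mechanism is actually used. Finally, $\bm{g}_{t+1}$ is $\mathcal{F}_t$-measurable and therefore passes through the conditioning unchanged. Combining these, $\mathbb{E}[\bm{e}_{t+1}\mid\mathcal{F}_t]=\bm{g}_{t+1}-\bm{g}_{t+1}=0$, which is the identity $\mathbb{E}_{y_t}[\bm{w}_t\mid y_{<t}]=\bm{g}_t$ noted in the Notations, shifted by one index.

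\textbf{Main obstacle.} The only delicate point is to state precisely that the law of $y_{t+1}$ given $\mathcal{F}_t$ is exactly $p(\cdot\mid y_{\le t})$, that is, that the tokens are generated by the same model whose probabilities define $\bm{g}$ (pure sampling, temperature $1$). We will state this as the modeling assumption. Two minor points then need handling. First, atoms of zero probability are irrelevant, because conditional expectations are defined only almost surely. Second, the process is indexed only up to $l_{\max}$; beyond $l_{\max}$, or after $\tau$, one may freeze $\bm{M}_t$, and the martingale property is preserved trivially.
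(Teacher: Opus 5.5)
Your proposal is correct and follows the paper's proof essentially step for step. The paper argues adaptedness from $\bm{w}_t,\bm{g}_t$ being determined by $y_{\le t}$, gets integrability from the crude bound $\|\bm{e}_i\|_2\le\|\bm{w}_i\|_2+\|\bm{g}_i\|_2\le 2$ (you use the coordinatewise $|\bm{e}_{i,y}|\le 1$ instead), and derives the martingale identity from $\mathbb{E}[\bm{w}_t\mid y_{<t}]=\bm{g}_t$, which you additionally state as an explicit modeling assumption (exact sampling at temperature $1$) that the paper only uses implicitly.
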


\begin{proof}
    We have
    \begin{equation}
        \begin{aligned}
            &\quad\ \mathbb{E}_{y_t}[\bm{M}_t\mid y_{<t}] \\
            &=\mathbb{E}_{y_t}[\bm{M}_{t-1}\mid y_{<t}] + \mathbb{E}_{y_t}[\bm{e}_t\mid y_{<t}]\\
            &=\bm{M}_{t-1} + \mathbb{E}_{y_t}[\bm{w}_t-\bm{g}_t\mid y_{<t}]\\
            &=\bm{M}_{t-1} + \mathbb{E}_{y_t}[\bm{w}_t\mid y_{<t}] - \bm{g}_t\\
            &=\bm{M}_{t-1} + \bm{g}_t - \bm{g}_t\\
            &=\bm{M}_{t-1}
        \end{aligned}
    \end{equation}
    \begin{equation}
        \begin{gathered}
            \mathbb{E}_{y_{\leq t}}[\|\bm{M}_t\|_2]\leq \mathbb{E}_{y_{\leq t}}\left[\sum_{i=1}^t\|\bm{e}_i\|_2\right]\leq\\
            \mathbb{E}_{y_{\leq t}}\left[\sum_{i=1}^t(\|\bm{w}_i\|_2 + \|\bm{g}_i\|_2)\right] \leq  \left[\sum_{i=1}^t2\right] =  2t < \infty
        \end{gathered}
    \end{equation}
    
    $\bm{g}_t$ and $\bm{w}_t$ is decided given $y_{\leq t}$, and so is $\bm{e}_t$. Thus $\bm{M}_t$ is adapted to filtration $\mathbb{F}$. According to the definition, $\bm{M}_t$ is a martingale.
\end{proof}

\begin{theorem}
    The second-oreder moment of martingale $\{M_t\}_{t\geq 1}$ can be expressed as follows:
    \begin{equation}
        \mathbb{E}_{y_{\leq t}}[\|\bm{M}_t\|_2^2] = \mathbb{E}_{y_{< t}}\left[\sum_{i=1}^t(1 - \|\bm{g}_i\|_2^2)\right]
        \label{eq:M-square-formula-apx}
    \end{equation}
\end{theorem}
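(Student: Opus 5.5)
The plan is to expand $\|\bm{M}_t\|_2^2$ as a sum of squared increments plus cross terms, and then show that the cross terms vanish in expectation by orthogonality of martingale increments. Writing $\bm{M}_t=\sum_{i=1}^t\bm{e}_i$, we have
\begin{equation*}
\|\bm{M}_t\|_2^2=\sum_{i=1}^t\|\bm{e}_i\|_2^2+2\sum_{1\leq i<j\leq t}\langle\bm{e}_i,\bm{e}_j\rangle .
\end{equation*}

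For a cross term with $i<j$, I condition on $y_{<j}$, equivalently on $\mathcal{F}_{j-1}$. The vector $\bm{e}_i$ is $\mathcal{F}_{j-1}$-measurable, which is the adaptedness established in the proof of Theorem 2. Hence
\begin{equation*}
\mathbb{E}_{y_j}[\langle\bm{e}_i,\bm{e}_j\rangle\mid y_{<j}]=\langle\bm{e}_i,\mathbb{E}_{y_j}[\bm{e}_j\mid y_{<j}]\rangle=0,
\end{equation*}
because $\mathbb{E}_{y_j}[\bm{w}_j\mid y_{<j}]=\bm{g}_j$. The tower property then gives $\mathbb{E}_{y_{\leq t}}[\langle\bm{e}_i,\bm{e}_j\rangle]=0$. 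Integrability is not an issue: every component is bounded, since $\|\bm{e}_i\|_2\leq 2$ and the vocabulary is finite.

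The core computation is the conditional second moment of a single increment. Using $\|\bm{w}_i\|_2^2=1$ and $\langle\bm{w}_i,\bm{g}_i\rangle=\bm{g}_{i,y_i}=p(y_i\mid y_{<i})$, we get
\begin{equation*}
\|\bm{e}_i\|_2^2=1-2p(y_i\mid y_{<i})+\|\bm{g}_i\|_2^2 .
\end{equation*}
Taking $\mathbb{E}_{y_i}[\cdot\mid y_{<i}]$ and using $\mathbb{E}_{y_i}[p(y_i\mid y_{<i})\mid y_{<i}]=\sum_{y}\bm{g}_{i,y}^2=\|\bm{g}_i\|_2^2$ yields
\begin{equation*}
\mathbb{E}_{y_i}[\|\bm{e}_i\|_2^2\mid y_{<i}]=1-\|\bm{g}_i\|_2^2 .
\end{equation*}
Since $\bm{g}_i$ depends only on $y_{<i}$, summing over $i$ and applying the tower property gives $\mathbb{E}_{y_{\leq t}}[\|\bm{M}_t\|_2^2]=\mathbb{E}\bigl[\sum_{i=1}^t(1-\|\bm{g}_i\|_2^2)\bigr]$. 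Each summand depends only on $y_{<i}\subseteq y_{<t}$, so this expectation can be written as $\mathbb{E}_{y_{<t}}$, which matches the statement.

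The main obstacle is bookkeeping rather than substance. The key identity is the conditional expectation of $p(y_i\mid y_{<i})$, namely that it equals $\|\bm{g}_i\|_2^2$. Beyond that, the care needed is in correctly nesting expectations over $y_{<j}$ and $y_j$ in the cross-term argument, and in justifying that the outer expectation only involves $y_{<t}$. As a side benefit, the intermediate identity $\mathbb{E}[\|\bm{e}_i\|_2^2]=\mathbb{E}[1-p(y_i\mid y_{<i})]$ will directly feed the later Theorems~\ref{thm:raw-easy-calculate} and~\ref{thm:M-error}.
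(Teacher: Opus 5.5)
Your proof is correct and is essentially the paper's argument. The paper peels off one increment at a time via $\mathbb{E}_{y_t}[\|\bm{M}_t\|_2^2\mid y_{<t}]=\|\bm{M}_{t-1}\|_2^2+1-\|\bm{g}_t\|_2^2$ (the single cross term $2\bm{M}_{t-1}\cdot\bm{e}_t$ vanishes by conditioning on $y_{<t}$) and then telescopes, whereas you expand all pairwise cross terms at once and kill each by conditioning on $y_{<j}$; your per-increment computation through $\langle\bm{w}_i,\bm{g}_i\rangle=p(y_i\mid y_{<i})$ is just a rephrasing of the paper's $\bm{g}_i\cdot\mathbb{E}_{y_i}[\bm{w}_i\mid y_{<i}]=\|\bm{g}_i\|_2^2$.
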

\begin{proof} 
    \begin{align}
        &\quad\ \mathbb{E}_{y_{\leq t}}[\|\bm{M}_t\|_2^2]=\mathbb{E}_{y_{< t}}[\mathbb{E}_{y_t}[\|\bm{M}_t\|_2^2\mid y_{<t}]]\label{eq:M-square1-apx}\\
        &\notag\\
        &\begin{aligned}
            &\quad\ \mathbb{E}_{y_t}[\|\bm{M}_t\|_2^2\mid y_{<t}]=\mathbb{E}_{y_t}[\|\bm{e}_t+\bm{M}_{t-1}\|_2^2\mid y_{<t}]\\
            &=\mathbb{E}_{y_t}[\|\bm{e}_t\|_2^2\mid y_{<t}] +\mathbb{E}_{y_t}[2\bm{M}_{t-1}\cdot\bm{e}_t\mid y_{<t}] +\\
            &\quad\ \mathbb{E}_{y_t}[\|\bm{M}_{t-1}\|_2^2\mid y_{<t}]\\
            &=\mathbb{E}_{y_t}[\|\bm{e}_t\|_2^2\mid y_{<t}]+2\bm{M}_{t-1}\cdot\mathbb{E}_{y_t}[\bm{e}_t\mid y_{<t}] + \\
            &\quad\ \|\bm{M}_{t-1}\|_2^2\\
            &=\mathbb{E}_{y_t}[\|\bm{e}_t\|_2^2\mid y_{<t}]+ \|\bm{M}_{t-1}\|_2^2
            \label{eq:M-square2-apx}
        \end{aligned}\\
        &\notag\\
        &\begin{aligned}
            &\quad\ \mathbb{E}_{y_t}[\|\bm{e}_t\|_2^2\mid y_{<t}]=\mathbb{E}_{y_t}[\|\bm{w}_t-\bm{g}_t\|_2^2\mid y_{<t}]\\
            &=\mathbb{E}_{y_t}[\|\bm{w}_t\|_2^2\mid y_{<t}] - \mathbb{E}_{y_t}[2\bm{g}_t\cdot\bm{w}_t\mid y_{<t}] +\\
            &\quad\ \mathbb{E}_{y_t}[\|\bm{g}_t\|_2^2\mid y_{<t}]\\
            &=1 - 2\bm{g}_t\cdot\mathbb{E}_{y_t}[\bm{w}_t\mid y_{<t}] + \|\bm{g}_t\|_2^2\\
            &=1 - 2\bm{g}_t\cdot\bm{g}_t + \|\bm{g}_t\|_2^2\\
            &=1 - \|\bm{g}_t\|_2^2
            \label{eq:M-square3-apx}
        \end{aligned}
    \end{align}
    
    By combining equations \eqref{eq:M-square1-apx}, \eqref{eq:M-square2-apx} and \eqref{eq:M-square3-apx}, we can get
    \begin{equation}
        \begin{gathered}
            \mathbb{E}_{y_{\leq t}}[\|\bm{M}_t\|_2^2]=\mathbb{E}_{y_{< t}}[1 - \|\bm{g}_t\|_2^2 + \|\bm{M}_{t-1}\|_2^2] =\\
            \mathbb{E}_{y_{< t}}[1 - \|\bm{g}_t\|_2^2] + \mathbb{E}_{y_{\leq t-1}}[\|\bm{M}_{t-1}\|_2^2]
        \end{gathered}
    \end{equation}
    
    Then the following recurrence relation holds:
    \begin{equation}
        \mathbb{E}_{y_{\leq t}}[\|\bm{M}_t\|_2^2] - \mathbb{E}_{y_{\leq t-1}}[\|\bm{M}_{t-1}\|_2^2] = \mathbb{E}_{y_{< t}}[1 - \|\bm{g}_t\|_2^2] 
    \end{equation}
    
    Thus
    \begin{equation}
        \begin{aligned}
            &\quad\ \mathbb{E}_{y_{\leq t}}[\|\bm{M}_t\|_2^2] - \mathbb{E}_{y_{\leq 1}}[\|\bm{M}_1\|_2^2]\\ 
            &=\mathbb{E}_{y_{\leq t}}[\|\bm{M}_t\|_2^2] - \mathbb{E}_{y_1}[\|\bm{e}_1\|_2^2\mid y_{<1}] \\
            &=\mathbb{E}_{y_{\leq t}}[\|\bm{M}_t\|_2^2] - (1 - \|\bm{g}_1\|_2^2)\\
            &=\mathbb{E}_{y_{< t}}\left[\sum_{i=2}^t(1 - \|\bm{g}_i\|_2^2)\right]
        \end{aligned}
    \end{equation}
    
    So
    \begin{equation}
        \mathbb{E}_{y_{\leq t}}[\|\bm{M}_t\|_2^2] = \mathbb{E}_{y_{<t}}\left[\sum_{i=1}^t(1 - \|\bm{g}_i\|_2^2)\right]
    \end{equation}    
\end{proof}

\begin{theorem}
    The second order moment of martingale $\{\bm{M}_t\}_{t\geq 1}$ with stopping time can be expressed as follows:
    \begin{equation}
        \mathbb{E}_{y_{\leq l_{\mathrm{max}}}}[\|\bm{M}_{\tau}\|_2^2]=\mathbb{E}_{y_{\leq l_{\mathrm{max}}}}\left[\sum_{t=1}^{\tau}(1-\|\bm{g}_t\|_2^2)\right]
        \label{eq:M-square-formula-terminate-apx}
    \end{equation}
\end{theorem}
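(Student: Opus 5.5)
The plan is to reduce the stopping-time identity to the fixed-time computation of Theorem~\ref{thm:raw-calculate}, applied to the stopped process. Define the stopped martingale $\bm{M}^{\tau}_t := \bm{M}_{t\wedge\tau} = \sum_{i=1}^{t}\mathbb{I}_{\{\tau\geq i\}}\bm{e}_i$. Since $\tau\leq l_{\mathrm{max}}$ surely, we have $\bm{M}_{\tau} = \bm{M}^{\tau}_{l_{\mathrm{max}}}$. It therefore suffices to show
\begin{equation*}
\mathbb{E}_{y_{\leq l_{\mathrm{max}}}}\big[\|\bm{M}^{\tau}_{l_{\mathrm{max}}}\|_2^2\big]=\mathbb{E}_{y_{\leq l_{\mathrm{max}}}}\Big[\sum_{t=1}^{l_{\mathrm{max}}}\mathbb{I}_{\{\tau\geq t\}}(1-\|\bm{g}_t\|_2^2)\Big],
\end{equation*}
because the right-hand side equals $\mathbb{E}\big[\sum_{t=1}^{\tau}(1-\|\bm{g}_t\|_2^2)\big]$.

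The key observation is that $\{\tau\geq t\}=\{\tau\leq t-1\}^c\in\mathcal{F}_{t-1}$, so the indicator $\mathbb{I}_{\{\tau\geq t\}}$ is determined by $y_{<t}$, i.e. it is predictable. I would then repeat the one-step computation in the proof of Theorem~\ref{thm:raw-calculate}, conditioning on $y_{<t}$:
\begin{equation*}
\mathbb{E}_{y_t}\big[\|\bm{M}^{\tau}_t\|_2^2\mid y_{<t}\big]=\|\bm{M}^{\tau}_{t-1}\|_2^2+2\,\mathbb{I}_{\{\tau\geq t\}}\bm{M}^{\tau}_{t-1}\cdot\mathbb{E}_{y_t}[\bm{e}_t\mid y_{<t}]+\mathbb{I}_{\{\tau\geq t\}}\,\mathbb{E}_{y_t}\big[\|\bm{e}_t\|_2^2\mid y_{<t}\big].
\end{equation*}
Here the indicator squared equals itself. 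The cross term vanishes by the martingale property, Theorem~1. By equation~\eqref{eq:M-square3-apx}, the last term equals $\mathbb{I}_{\{\tau\geq t\}}(1-\|\bm{g}_t\|_2^2)$.

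Taking full expectations gives the recurrence
\begin{equation*}
\mathbb{E}\|\bm{M}^{\tau}_t\|_2^2-\mathbb{E}\|\bm{M}^{\tau}_{t-1}\|_2^2=\mathbb{E}\big[\mathbb{I}_{\{\tau\geq t\}}(1-\|\bm{g}_t\|_2^2)\big],
\end{equation*}
with $\bm{M}^{\tau}_0=0$. Telescoping from $t=1$ to $l_{\mathrm{max}}$ then yields the claim. Every quantity involved is bounded (each $\|\bm{e}_t\|_2\leq 2$), so all expectations exist and linearity applies freely.

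The main obstacle is justifying that the indicator can be pulled out of the conditional expectation, i.e. checking predictability of $\{\tau\geq t\}$ from the definition of $\tau$ as the first [EOS] time capped at $l_{\mathrm{max}}$. This reduces to noting that $\{\tau\leq t-1\}$ is decided by whether [EOS] appears among $y_1,\dots,y_{t-1}$, together with $t-1\geq l_{\mathrm{max}}$.

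A second, minor subtlety is the interpretation of generation after $\tau$: tokens after $\tau$ may or may not be sampled. This does not matter, because the stopped process never uses them and the expectation over $y_{\leq l_{\mathrm{max}}}$ integrates them out trivially.
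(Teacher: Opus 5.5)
Your proof is correct and uses the same two ingredients as the paper's proof. The first is predictability, $\{\tau\geq t\}\in\mathcal{F}_{t-1}$, which lets the indicator come out of $\mathbb{E}_{y_t}[\cdot\mid y_{<t}]$. The second is the one-step identity $\mathbb{E}_{y_t}[\|\bm{e}_t\|_2^2\mid y_{<t}]=1-\|\bm{g}_t\|_2^2$, together with the vanishing cross term. The difference is only bookkeeping: the paper writes $\|\bm{M}_{\tau}\|_2^2=\sum_{t}\mathbb{I}_{\{\tau=t\}}\|\bm{M}_t\|_2^2$, conditions backward on $y_{<l_{\mathrm{max}}},y_{<l_{\mathrm{max}}-1},\dots$, and merges $\mathbb{I}_{\{\tau=t-1\}}+\mathbb{I}_{\{\tau\geq t\}}=\mathbb{I}_{\{\tau\geq t-1\}}$. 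You instead telescope forward through the stopped process $\bm{M}_{t\wedge\tau}=\sum_{i\leq t}\mathbb{I}_{\{\tau\geq i\}}\bm{e}_i$, which is cleaner and makes explicit the induction the paper leaves as ``successively.''
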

\begin{proof}

    \begin{gather}
            \mathbb{E}_{y_{l_{\mathrm{max}}}}[\|\bm{M}_{\tau}\|_2^2\mid y_{<l_{\mathrm{max}}}] \notag\\
            =\mathbb{E}_{y_{l_{\mathrm{max}}}}\left[\sum_{t=1}^{l_{\mathrm{max}}}\mathbb{I}_{\{\tau=t\}}\|\bm{M}_t\|_2^2\mid y_{<l_{\mathrm{max}}}\right]\notag\\
            =\mathbb{E}_{y_{l_{\mathrm{max}}}}[\mathbb{I}_{\{\tau=l_{\mathrm{max}}\}}\|\bm{M}_{l_{\mathrm{max}}}\|_2^2\mid y_{<l_{\mathrm{max}}}] +\notag\\
            \sum_{t=1}^{l_{\mathrm{max}}-1}\mathbb{I}_{\{\tau=t\}}\|\bm{M}_t\|_2^2\notag\\
            \begin{gathered}
                =\mathbb{I}_{\{\tau=l_{\mathrm{max}}\}}\mathbb{E}_{y_{l_{\mathrm{max}}}}[\|\bm{M}_{l_{\mathrm{max}}}\|_2^2\mid y_{<l_{\mathrm{max}}}] +\\
                \sum_{t=1}^{l_{\mathrm{max}}-1}\mathbb{I}_{\{\tau=t\}}\|\bm{M}_t\|_2^2
            \end{gathered}\\
            =\mathbb{I}_{\{\tau=l_{\mathrm{max}}\}}(1-\|\bm{g}_{l_{\mathrm{max}}}\|_2^2 +
            \|\bm{M}_{l_{\mathrm{max}}-1}\|_2^2) +\notag\\ \sum_{t=1}^{l_{\mathrm{max}}-1}\mathbb{I}_{\{\tau=t\}}\|\bm{M}_t\|_2^2\notag\\
            =\mathbb{I}_{\{\tau=l_{\mathrm{max}}\}}(1-\|\bm{g}_{l_{\mathrm{max}}}\|_2^2)+\notag\\
            \mathbb{I}_{\{\tau\geq l_{\mathrm{max}}-1\}}\|\bm{M}_{l_{\mathrm{max}}-1}\|_2^2 +
            \sum_{t=1}^{l_{\mathrm{max}}-2}\mathbb{I}_{\{\tau=t\}}\|\bm{M}_t\|_2^2
            \notag
    \end{gather}
    Taking conditional expectation on both sides with respect to $y_{<l_{\mathrm{max}}-1}$:
    \begin{equation}
        \begin{gathered}
            \mathbb{E}_{y_{l_{\mathrm{max}}-1}y_{l_{\mathrm{max}}}}[\|\bm{M}_{\tau}\|_2^2\mid y_{<l_{\mathrm{max}}-1}]\\
            =\mathbb{E}_{y_{l_{\mathrm{max}}-1}}[\mathbb{E}_{y_{l_{\mathrm{max}}}}[\|\bm{M}_{\tau}\|_2^2\mid y_{<l_{\mathrm{max}}}]\mid y_{<l_{\mathrm{max}}-1}] \\
            =  \mathbb{E}_{y_{l_{\mathrm{max}}-1}}[\mathbb{I}_{\{\tau=l_{\mathrm{max}}\}}(1-\|\bm{g}_{l_{\mathrm{max}}}\|_2^2)\mid y_{<l_{\mathrm{max}}-1}] + \\
            \mathbb{I}_{\{\tau\geq l_{\mathrm{max}}-1\}} \mathbb{E}_{y_{l_{\mathrm{max}}-1}}[\|\bm{M}_{l_{\mathrm{max}}-1}\|_2^2\mid y_{<l_{\mathrm{max}}-1}] + \\
            \sum_{t=1}^{l_{\mathrm{max}}-2}\mathbb{I}_{\{\tau=t\}}\|\bm{M}_t\|_2^2\\
            = \mathbb{E}_{y_{l_{\mathrm{max}}-1}}[\mathbb{I}_{\{\tau=l_{\mathrm{max}}\}}(1-\|\bm{g}_{l_{\mathrm{max}}}\|_2^2)\mid y_{<l_{\mathrm{max}}-1}] +\\
            \mathbb{I}_{\{\tau\geq l_{\mathrm{max}}-1\}}(1-\|\bm{g}_{l_{\mathrm{max}}-1}\|_2^2 + \|\bm{M}_{l_{\mathrm{max}}-2}\|_2^2) + \\
            \sum_{t=1}^{l_{\mathrm{max}}-2}\mathbb{I}_{\{\tau=t\}}\|\bm{M}_t\|_2^2\\
            =\mathbb{E}_{y_{l_{\mathrm{max}}-1}}[\mathbb{I}_{\{\tau\geq l_{\mathrm{max}}\}}(1-\|\bm{g}_{l_{\mathrm{max}}}\|_2^2)\mid y_{<l_{\mathrm{max}}-1}] + \\
            \mathbb{I}_{\{\tau\geq l_{\mathrm{max}}-1\}}(1-\|\bm{g}_{l_{\mathrm{max}}-1}\|_2^2)+\\
            \mathbb{I}_{\{\tau\geq l_{\mathrm{max}}-2\}}\|\bm{M}_{l_{\mathrm{max}}-2}\|_2^2 + 
            \sum_{t=1}^{l_{\mathrm{max}}-3}\mathbb{I}_{\{\tau=t\}}\|\bm{M}_t\|_2^2
        \end{gathered}
    \end{equation}
    Then, taking conditional expectations successively with respect to $y_{<i}\ (i=l_{\mathrm{max}}-2,\cdots,1)$, we have
    \begin{equation}
        \begin{aligned}
            &\quad\ \mathbb{E}_{y_{\leq l_{\mathrm{max}}}}[\|\bm{M}_{\tau}\|_2^2]\\
            &= \mathbb{E}_{y_{\leq l_{\mathrm{max}}}}\left[\sum_{t=1}^{l_{\mathrm{max}}}\mathbb{I}_{\{\tau\geq t\}}(1-\|\bm{g}_t\|_2^2)\right]\\
            &=\mathbb{E}_{y_{\leq l_{\mathrm{max}}}}\left[\sum_{t=1}^{\tau}(1-\|\bm{g}_t\|_2^2)\right]
        \end{aligned}
    \end{equation}
\end{proof}

\begin{theorem}
    The second-oreder moment of martingale $\{M_t\}_{t\geq 1}$ can be expressed as follows:
    \begin{equation}
        \mathbb{E}_{y_{\leq t}}[\|\bm{M}_t\|_2^2] =\mathbb{E}_{y_{\leq t}}\left[\sum_{i=1}^t(1-p(y_i\mid y_{<i}))\right]
    \end{equation}
\end{theorem}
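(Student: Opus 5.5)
I will build on Theorem~\ref{thm:raw-calculate}, which already gives $\mathbb{E}_{y_{\leq t}}[\|\bm{M}_t\|_2^2] = \mathbb{E}_{y_{< t}}\left[\sum_{i=1}^t(1 - \|\bm{g}_i\|_2^2)\right]$. It then suffices to show, term by term, that $\|\bm{g}_i\|_2^2$ and $p(y_i\mid y_{<i})$ have the same expectation. That is, for each $1\le i\le t$,
\begin{equation}
\mathbb{E}_{y_{<i}}\left[\|\bm{g}_i\|_2^2\right]=\mathbb{E}_{y_{\leq i}}\left[p(y_i\mid y_{<i})\right].
\end{equation}
Linearity of expectation then finishes the argument. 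Since $\bm{g}_i$ and $p(y_i\mid y_{<i})$ depend only on $y_{\leq i}$, the expectation over $y_{\leq t}$ reduces to the expectation over $y_{\leq i}$, and likewise over $y_{<t}$ for $\bm{g}_i$ with $i\le t$.

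The key step is the tower property applied conditionally on $y_{<i}$. Given $y_{<i}$, the vector $\bm{g}_i$ is fixed, and the realized probability can be written as $p(y_i\mid y_{<i})=\bm{g}_i\cdot\bm{w}_i$, because $\bm{w}_i$ is the one-hot indicator of $y_i$. Using $\mathbb{E}_{y_i}[\bm{w}_i\mid y_{<i}]=\bm{g}_i$, as already noted in the notation subsection, we get
\begin{equation}
\mathbb{E}_{y_i}[p(y_i\mid y_{<i})\mid y_{<i}]=\bm{g}_i\cdot\mathbb{E}_{y_i}[\bm{w}_i\mid y_{<i}]=\|\bm{g}_i\|_2^2 .
\end{equation}
Equivalently, one can expand the sum directly: $\sum_{y\in\mathcal{V}}p(y\mid y_{<i})\,p(y\mid y_{<i})=\|\bm{g}_i\|_2^2$. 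Taking $\mathbb{E}_{y_{<i}}$ of both sides gives the displayed identity.

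There is no real obstacle here. The only point needing care is bookkeeping of which variables each expectation ranges over, since the paper's subscript notation is informal. I would state explicitly that integrability is trivial, as every quantity lies in $[0,1]$, and that the expectation over extra future tokens $y_{i+1},\dots,y_t$ of a function of $y_{\leq i}$ leaves it unchanged, because the conditional distributions sum to one. An alternative route is to redo the recursion in the proof of Theorem~\ref{thm:raw-calculate} with $\mathbb{E}_{y_t}[\|\bm{e}_t\|_2^2\mid y_{<t}]=1-\mathbb{E}_{y_t}[p(y_t\mid y_{<t})\mid y_{<t}]$. Here one uses $\|\bm{e}_t\|_2^2=1-2p(y_t\mid y_{<t})+\|\bm{g}_t\|_2^2$, whose conditional mean is $1-\|\bm{g}_t\|_2^2$. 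I would mention this route only as a consistency check.
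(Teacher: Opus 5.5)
Your proposal is correct and follows essentially the same route as the paper: start from Theorem~\ref{thm:raw-calculate}, use linearity to reduce to the term-by-term identity $\mathbb{E}_{y_{<i}}[\|\bm{g}_i\|_2^2]=\mathbb{E}_{y_{\leq i}}[p(y_i\mid y_{<i})]$, and then extend the expectations from $y_{\leq i}$ to $y_{\leq t}$. The paper proves that identity by explicit summation, rewriting $p(y_{<i})p^2(y_i\mid y_{<i})=p(y_{\leq i})p(y_i\mid y_{<i})$, which is your alternative direct expansion; your main argument via $p(y_i\mid y_{<i})=\bm{g}_i\cdot\bm{w}_i$ and the tower property is the same computation written as a conditional expectation.
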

\begin{proof}
    \begin{align}
        &\quad\ \mathbb{E}_{y_{\leq t}}[\|\bm{M}_t\|_2^2]\notag \\
        &=\mathbb{E}_{y_{< t}}\left[\sum_{i=1}^t(1 - \|\bm{g}_i\|_2^2)\right]\notag\\
        &=\sum_{i=1}^t\mathbb{E}_{y_{< t}}[1 - \|\bm{g}_i\|_2^2]\notag\\
        &=\sum_{i=1}^t\mathbb{E}_{y_{< i}}[1 - \|\bm{g}_i\|_2^2]\notag\\
        &=\sum_{i=1}^t\sum_{y_{<i}\in\mathcal{V}^{i-1}}p(y_{< i})(1 - \|\bm{g}_i\|_2^2)\notag\\ 
        &=\sum_{i=1}^t\sum_{y_{<i}\in\mathcal{V}^{i-1}}p(y_{< i})(1 - \sum_{y_i\in\mathcal{V}}p^2(y_i\mid y_{<i}))\notag\\
        &=\sum_{i=1}^t\sum_{y_{<i}\in\mathcal{V}^{i-1}}(p(y_{< i})- \sum_{y_i\in\mathcal{V}}p(y_{< i})p^2(y_i\mid y_{<i}))\\
        &=\sum_{i=1}^t\sum_{y_{<i}\in\mathcal{V}^{i-1}}(p(y_{< i})- \sum_{y_i\in\mathcal{V}}p(y_{\leq i})p(y_i\mid y_{<i}))\notag\\
        &=\sum_{i=1}^t\sum_{y_{\leq i}\in\mathcal{V}^{i}}p(y_{\leq i})(1-p(y_i\mid y_{<i}))\notag\\
        &=\sum_{i=1}^t\mathbb{E}_{y_{\leq i}}[1-p(y_i\mid y_{<i})]\notag\\
        &=\sum_{i=1}^t\mathbb{E}_{y_{\leq t}}[1-p(y_i\mid y_{<i})]\notag\\
        &= \mathbb{E}_{y_{\leq t}}\left[\sum_{i=1}^t(1-p(y_i\mid y_{<i}))\right]\notag
    \end{align}
\end{proof}

\begin{theorem}
    The second order moment of martingale $\{\bm{M}_t\}_{t\geq 1}$ with stopping time can be expressed as follows:
    \begin{equation}
        \mathbb{E}_{y_{\leq l_{\mathrm{max}}}}[\|\bm{M}_{\tau}\|_2^2]
        =\mathbb{E}_{y_{\leq l_{\mathrm{max}}}}\left[\sum_{t=1}^{\tau}(1-p(y_t\mid y_{<t}))\right]
    \end{equation}
\end{theorem}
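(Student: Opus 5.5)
The plan is to reduce the statement to Theorem~\ref{thm:stoptime-calculate}. That result already gives
\[
\mathbb{E}_{y_{\leq l_{\mathrm{max}}}}[\|\bm{M}_{\tau}\|_2^2]=\mathbb{E}_{y_{\leq l_{\mathrm{max}}}}\Big[\sum_{t=1}^{l_{\mathrm{max}}}\mathbb{I}_{\{\tau\geq t\}}(1-\|\bm{g}_t\|_2^2)\Big].
\]
It therefore remains to show, term by term in $t$, that
\[
\mathbb{E}\big[\mathbb{I}_{\{\tau\geq t\}}(1-\|\bm{g}_t\|_2^2)\big]=\mathbb{E}\big[\mathbb{I}_{\{\tau\geq t\}}(1-p(y_t\mid y_{<t}))\big].
\]
Summing over $t=1,\dots,l_{\mathrm{max}}$ and rewriting $\sum_{t}\mathbb{I}_{\{\tau\geq t\}}(\cdot)$ as $\sum_{t=1}^{\tau}(\cdot)$ then gives the claim. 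Since $\tau\le l_{\mathrm{max}}$ is bounded, all sums are finite and linearity of expectation applies freely.

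The key observation is that $\{\tau\geq t\}=\{\tau\leq t-1\}^c$. Because $\tau$ is a stopping time, this event lies in $\mathcal{F}_{t-1}$ (for $t=1$ it is the whole space). Thus $\mathbb{I}_{\{\tau\geq t\}}$ is a function of $y_{<t}$ alone, and so is $\bm{g}_t$. I condition on $y_{<t}$ via the tower property and pull the indicator out:
\[
\mathbb{E}\big[\mathbb{I}_{\{\tau\geq t\}}(1-p(y_t\mid y_{<t}))\big]=\mathbb{E}_{y_{<t}}\Big[\mathbb{I}_{\{\tau\geq t\}}\,\mathbb{E}_{y_t}[1-p(y_t\mid y_{<t})\mid y_{<t}]\Big].
\]
The inner conditional expectation is
\[
\sum_{y\in\mathcal{V}}p(y\mid y_{<t})\big(1-p(y\mid y_{<t})\big)=1-\|\bm{g}_t\|_2^2.
\]
This is exactly the computation in the proof of Theorem~\ref{thm:raw-easy-calculate}, now performed conditionally rather than after summing over $y_{<t}$. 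Replacing it and undoing the tower property gives the term-wise identity.

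The only point needing care is the measurability claim $\{\tau\geq t\}\in\mathcal{F}_{t-1}$. This is precisely what lets the indicator commute with the inner expectation over $y_t$. Note that one cannot use $\{\tau=t\}$ directly, because it depends on $y_t$ (whether $y_t$ is [EOS]). The argument works only because Theorem~\ref{thm:stoptime-calculate} is expressed through the predictable indicators $\mathbb{I}_{\{\tau\geq t\}}$. So I will explicitly invoke that form, the penultimate line of its proof, rather than the version written with the upper limit $\tau$.
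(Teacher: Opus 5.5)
Your proposal is correct and takes essentially the same route as the paper. The paper also starts from the indicator form $\mathbb{E}\bigl[\sum_{t=1}^{l_{\mathrm{max}}}\mathbb{I}_{\{\tau\geq t\}}(1-\|\bm{g}_t\|_2^2)\bigr]$ in the proof of Theorem~\ref{thm:stoptime-calculate}, and proves the same term-wise identity by explicit summation over $y_{<t}$ and $y_t$ using $p(y_{<t})p^2(y_t\mid y_{<t})=p(y_{\leq t})p(y_t\mid y_{<t})$, which is your tower-property computation written out, with the measurability $\{\tau\geq t\}\in\mathcal{F}_{t-1}$ that you state explicitly used there only implicitly.
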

\begin{proof}
    \begin{align}
        &\quad\ \mathbb{E}_{y_{\leq l_{\mathrm{max}}}}[\|\bm{M}_{\tau}\|_2^2]=\mathbb{E}_{y_{\leq l_{\mathrm{max}}}}\left[\sum_{t=1}^{l_{\mathrm{max}}}\mathbb{I}_{\{\tau\geq t\}}(1-\|\bm{g}_t\|_2^2)\right]\notag\\
        &=\sum_{t=1}^{l_{\mathrm{max}}}\mathbb{E}_{y_{\leq l_{\mathrm{max}}}}[\mathbb{I}_{\{\tau\geq t\}}(1-\|\bm{g}_t\|_2^2)]\notag\\
        &=\sum_{t=1}^{l_{\mathrm{max}}}\mathbb{E}_{y_{<t}}[\mathbb{I}_{\{\tau\geq t\}}(1-\|\bm{g}_t\|_2^2)]\notag\\
        &=\sum_{t=1}^{l_{\mathrm{max}}}\sum_{y_{<t}\in\mathcal{V}^{t-1}}p(y_{<t})\mathbb{I}_{\{\tau\geq t\}}(1-\sum_{y_t\in\mathcal{V}}p^2(y_t\mid y_{<t}))\notag\\
        &=\sum_{t=1}^{l_{\mathrm{max}}}\sum_{y_{<t}\in\mathcal{V}^{t-1}}\mathbb{I}_{\{\tau\geq t\}}(p(y_{<t})-\sum_{y_t\in\mathcal{V}}p(y_{<t})p^2(y_t\mid y_{<t}))\notag\\
        &=\sum_{t=1}^{l_{\mathrm{max}}}\sum_{y_{<t}\in\mathcal{V}^{t-1}}\mathbb{I}_{\{\tau\geq t\}}(p(y_{<t})-\sum_{y_t\in\mathcal{V}}p(y_{\leq t})p(y_t\mid y_{<t}))\notag\\
        &=\sum_{t=1}^{l_{\mathrm{max}}}\sum_{y_{\leq t}\in\mathcal{V}^t}p(y_{\leq t})\mathbb{I}_{\{\tau\geq t\}}(1-p(y_t\mid y_{<t}))\\
        &= \sum_{t=1}^{l_{\mathrm{max}}}\mathbb{E}_{y_{\leq t}}[\mathbb{I}_{\{\tau\geq t\}}(1-p(y_t\mid y_{<t}))]\notag\\
        &=\sum_{t=1}^{l_{\mathrm{max}}}\mathbb{E}_{y_{\leq l_{\mathrm{max}}}}[\mathbb{I}_{\{\tau\geq t\}}(1-p(y_t\mid y_{<t}))]\notag\\
        &=\mathbb{E}_{y_{\leq l_{\mathrm{max}}}}\left[\sum_{t=1}^{l_{\mathrm{max}}}\mathbb{I}_{\{\tau\geq t\}}(1-p(y_t\mid y_{<t}))\right]\notag\\
        &=\mathbb{E}_{y_{\leq l_{\mathrm{max}}}}\left[\sum_{t=1}^{\tau}(1-p(y_t\mid y_{<t}))\right]\notag
    \end{align}
\end{proof}

\begin{theorem}
    The second-oreder moment of martingale (with stopping time) can be expressed as follows:
    \begin{gather}
        \mathbb{E}_{y_{\leq t}}[\|\bm{M}_t\|_2^2]=\mathbb{E}_{y_{\leq t}}\left[\sum_{i=1}^t \bm{e}_i^2\right]\label{eq:M-error-apx}\\
        \mathbb{E}_{y_{\leq l_{\mathrm{max}}}}[\|\bm{M}_{\tau}\|_2^2]=\mathbb{E}_{y_{\leq l_{\mathrm{max}}}}\left[\sum_{t=1}^{\tau} \bm{e}_t^2\right]\label{eq:M-error-terminate-apx}
    \end{gather}
\end{theorem}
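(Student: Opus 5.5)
The plan is to read $\bm{e}_i^2$ as the squared Euclidean norm $\|\bm{e}_i\|_2^2 = \bm{e}_i\cdot\bm{e}_i$, and to obtain both identities from the orthogonality of martingale increments. The one-step computation \eqref{eq:M-square3-apx} in the proof of Theorem~\ref{thm:raw-calculate} already gives
\begin{equation*}
\mathbb{E}_{y_i}[\|\bm{e}_i\|_2^2\mid y_{<i}] = 1-\|\bm{g}_i\|_2^2 .
\end{equation*}
Taking the outer expectation over $y_{<i}$ and summing over $i$ gives $\mathbb{E}_{y_{\leq t}}[\sum_{i=1}^t\|\bm{e}_i\|_2^2] = \mathbb{E}_{y_{<t}}[\sum_{i=1}^t(1-\|\bm{g}_i\|_2^2)]$. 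By Theorem~\ref{thm:raw-calculate} this equals $\mathbb{E}_{y_{\leq t}}[\|\bm{M}_t\|_2^2]$, which proves \eqref{eq:M-error-apx}.

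A more self-contained route is to expand $\|\bm{M}_t\|_2^2 = \sum_i\|\bm{e}_i\|_2^2 + 2\sum_{j<i}\bm{e}_j\cdot\bm{e}_i$ and show that each cross term has zero expectation. Since $\bm{e}_j$ is $\mathcal{F}_{i-1}$-measurable for $j<i$, we get $\mathbb{E}[\bm{e}_j\cdot\bm{e}_i] = \mathbb{E}[\bm{e}_j\cdot\mathbb{E}[\bm{e}_i\mid y_{<i}]] = 0$ by the martingale property (Theorem 2). Integrability is trivial because every component is bounded, with $\|\bm{e}_i\|_2\leq 2$.

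For the stopped version \eqref{eq:M-error-terminate-apx}, I would write $\sum_{t=1}^{\tau}\|\bm{e}_t\|_2^2 = \sum_{t=1}^{l_{\mathrm{max}}}\mathbb{I}_{\{\tau\geq t\}}\|\bm{e}_t\|_2^2$. The key observation is that $\{\tau\geq t\}=\{\tau\leq t-1\}^c\in\mathcal{F}_{t-1}$, so the indicator is determined by $y_{<t}$. Conditioning on $y_{<t}$ then gives
\begin{equation*}
\mathbb{E}[\mathbb{I}_{\{\tau\geq t\}}\|\bm{e}_t\|_2^2] = \mathbb{E}[\mathbb{I}_{\{\tau\geq t\}}(1-\|\bm{g}_t\|_2^2)].
\end{equation*}
Summing over $t$ and using linearity (the sum is finite because $\tau\leq l_{\mathrm{max}}$) yields $\mathbb{E}[\sum_{t=1}^{\tau}(1-\|\bm{g}_t\|_2^2)]$. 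By Theorem~\ref{thm:stoptime-calculate} this equals $\mathbb{E}_{y_{\leq l_{\mathrm{max}}}}[\|\bm{M}_\tau\|_2^2]$. Alternatively, one can apply the cross-term argument to the stopped increments $\mathbb{I}_{\{\tau\geq t\}}\bm{e}_t$, whose predictable coefficients preserve the zero conditional mean.

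The only real subtlety is the predictability of $\mathbb{I}_{\{\tau\geq t\}}$, i.e.\ that it is $\mathcal{F}_{t-1}$-measurable rather than merely $\mathcal{F}_t$-measurable. This is what allows it to be pulled out of the conditional expectation over $y_t$. It follows directly from $\tau$ being a stopping time, and the rest is bookkeeping with the tower property over the finite space $\mathcal{V}^{l_{\mathrm{max}}}$.
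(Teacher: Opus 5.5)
Your proposal is correct and follows the paper's proof: the paper likewise computes $\mathbb{E}[\|\bm{e}_i\|_2^2]=\mathbb{E}_{y_{<i}}[1-\|\bm{g}_i\|_2^2]$ and $\mathbb{E}[\mathbb{I}_{\{\tau\geq t\}}\|\bm{e}_t\|_2^2]=\mathbb{E}_{y_{<t}}[\mathbb{I}_{\{\tau\geq t\}}(1-\|\bm{g}_t\|_2^2)]$ by conditioning on $y_{<t}$ and pulling the indicator out, then matches these against Theorems~\ref{thm:raw-calculate} and~\ref{thm:stoptime-calculate}. Your check that $\{\tau\geq t\}\in\mathcal{F}_{t-1}$ justifies a step the paper uses without comment, and your cross-term (orthogonal increments) argument is a valid self-contained alternative that does not depend on those two theorems.
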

\begin{proof}
    \begin{equation}
        \begin{aligned}
            &\quad\ \mathbb{E}_{y_{\leq l_{\mathrm{max}}}}[\bm{e}_i^2]=\mathbb{E}_{y_{\leq i}}[\bm{e}_i^2]\\
            &=\mathbb{E}_{y_{\leq i}}[(\bm{w}_i-\bm{g}_i)^2]\\
            &=\mathbb{E}_{y_{\leq i}}[\bm{w}_i^2-2\bm{w}_i\cdot\bm{g}_i+\bm{g}_i^2]\\
            &=\mathbb{E}_{y_{\leq i}}[\bm{w}_i^2]-2\mathbb{E}_{y_{\leq i}}[\bm{w}_i\cdot\bm{g}_i]+\mathbb{E}_{y_{\leq i}}[\bm{g}_i^2]\\
            &=1-2\mathbb{E}_{y_{< i}}[\mathbb{E}_{y_i}[\bm{w}_i\cdot\bm{g}_i\mid y_{<i}]]+\mathbb{E}_{y_{< i}}[\bm{g}_i^2]\\
            &=1-2\mathbb{E}_{y_{< i}}[\bm{g}_i^2]+\mathbb{E}_{y_{< i}}[\bm{g}_i^2]\\
            &=\mathbb{E}_{y_{< i}}[1-\bm{g}_i^2]
            \label{eq:error-square-apx}
        \end{aligned}
    \end{equation}
    \begin{equation}
        \begin{aligned}
            &\quad\ \mathbb{E}_{y_{\leq l_{\mathrm{max}}}}[\mathbb{I}_{\{\tau\geq t\}}\bm{e}_t^2]=\mathbb{E}_{y_{\leq t}}[\mathbb{I}_{\{\tau\geq t\}}\bm{e}_t^2]\\
            &=\mathbb{E}_{y_{\leq t}}[\mathbb{I}_{\{\tau\geq t\}}(\bm{w}_t-\bm{g}_t)^2]\\
            &=\mathbb{E}_{y_{\leq t}}[\mathbb{I}_{\{\tau\geq t\}}(\bm{w}_t^2-2\bm{w}_t\cdot\bm{g}_t+\bm{g}_t^2)]\\
            &=\mathbb{E}_{y_{< t}}[\mathbb{E}_{y_t}[\mathbb{I}_{\{\tau\geq t\}}(1-2\bm{w}_t\cdot\bm{g}_t+\bm{g}_t^2)\mid y_{<t}]]\\
            &=\mathbb{E}_{y_{< t}}[\mathbb{I}_{\{\tau\geq t\}}\mathbb{E}_{y_t}[1-2\bm{w}_t\cdot\bm{g}_t+\bm{g}_t^2\mid y_{<t}]]\\
            &=\mathbb{E}_{y_{< t}}[\mathbb{I}_{\{\tau\geq t\}}(1-\bm{g}_t^2)]
            \label{eq:error-square-terminate-apx}
        \end{aligned}
    \end{equation}
    By combining equations \eqref{eq:error-square-apx} and \eqref{eq:error-square-terminate-apx} with equations \eqref{eq:M-square-formula-apx} and \eqref{eq:M-square-formula-terminate-apx}, respectively, we obtain equations \eqref{eq:M-error-apx} and \eqref{eq:M-error-terminate-apx}.
\end{proof}

\begin{theorem}
    For the second order moments $\mathbb{E}_{y_{\leq t}}[\|\bm{M}_t\|_2^2]$ and $\mathbb{E}_{y_{\leq l_{\mathrm{max}}}}[\|\bm{M}_T\|_2^2]$, the following relation holds:
    \begin{gather}
         \sum_{i=1}^t\mathrm{Var}(\bm{w}_i)=\mathbb{E}_{y_{\leq t}}[\|\bm{M}_t\|_2^2]+\sum_{i=1}^t\mathrm{Var}(\bm{g}_i)\label{eq:M-decompose-apx}\\
         \begin{gathered}
             \mathbb{E}_{y_{\leq l_{\mathrm{max}}}}\left[\sum_{t=1}^{\tau}\mathrm{Var}(\bm{w}_t)\right]\\
             =\mathbb{E}_{y_{\leq l_\mathrm{max}}}[\|\bm{M}_{\tau}\|_2^2]+\mathbb{E}_{y_{\leq l_{\mathrm{max}}}}\left[\sum_{t=1}^{\tau}\mathrm{Var}(\bm{g}_t)\right]
         \end{gathered}
         \label{eq:M-decompose-terminate-apx}
    \end{gather}

    Then
    \begin{gather}
        \sum_{i=1}^t\mathrm{Var}(\bm{w}_i)\geq \mathbb{E}_{y_{\leq t}}[\|\bm{M}_t\|_2^2]\label{eq:M-lower-bound-apx}\\
         \mathbb{E}_{y_{\leq l_{\mathrm{max}}}}\left[\sum_{t=1}^{\tau}\mathrm{Var}(\bm{w}_t)\right]\geq\mathbb{E}_{y_{\leq l_\mathrm{max}}}[\|\bm{M}_{\tau}\|_2^2]\label{eq:M-lower-bound-terminate-apx}
    \end{gather}
\end{theorem}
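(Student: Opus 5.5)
The plan is to interpret $\mathrm{Var}(\bm{w}_i)$ and $\mathrm{Var}(\bm{g}_i)$ as total variances of random vectors, $\mathrm{Var}(\bm{v}) := \mathbb{E}[\|\bm{v}-\mathbb{E}[\bm{v}]\|_2^2]$, with the expectation taken over $y_{\leq i}$ (respectively $y_{<i}$). The identity then becomes the law of total variance applied step by step. In the stopping-time version the per-step variances sit inside an expectation multiplied by $\mathbb{I}_{\{\tau\geq t\}}$, so there I would read them as conditional (given $y_{<t}$) centered quantities. I would fix this reading at the outset, because the decomposition only holds under a consistent choice of centering.

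\textbf{Fixed-$t$ case.} For each $i$, I would write $\bm{w}_i-\mathbb{E}[\bm{w}_i] = (\bm{w}_i-\bm{g}_i) + (\bm{g}_i-\mathbb{E}[\bm{g}_i])$. Here I use $\mathbb{E}[\bm{w}_i]=\mathbb{E}_{y_{<i}}[\mathbb{E}_{y_i}[\bm{w}_i\mid y_{<i}]]=\mathbb{E}[\bm{g}_i]$. Expanding the squared norm, the cross term is
\[
\mathbb{E}_{y_{<i}}\left[(\bm{g}_i-\mathbb{E}[\bm{g}_i])\cdot\mathbb{E}_{y_i}[\bm{e}_i\mid y_{<i}]\right]=0,
\]
since $\bm{g}_i$ is $\mathcal{F}_{i-1}$-measurable and $\bm{e}_i$ has conditional mean zero, as shown in the proof of the martingale theorem. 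Hence $\mathrm{Var}(\bm{w}_i)=\mathbb{E}[\|\bm{e}_i\|_2^2]+\mathrm{Var}(\bm{g}_i)$. Summing over $i$ and invoking Theorem~\ref{thm:M-error} (equation~\eqref{eq:M-error}) turns $\sum_i\mathbb{E}[\|\bm{e}_i\|_2^2]$ into $\mathbb{E}_{y_{\leq t}}[\|\bm{M}_t\|_2^2]$. This gives \eqref{eq:M-decompose-apx}. As a cross-check, $\mathbb{E}\|\bm{w}_i\|^2=1$, so the left-hand side is $1-\|\mathbb{E}\bm{g}_i\|^2$. The right-hand side is $(1-\mathbb{E}\|\bm{g}_i\|^2)+(\mathbb{E}\|\bm{g}_i\|^2-\|\mathbb{E}\bm{g}_i\|^2)$, which agrees with Theorem~\ref{thm:raw-calculate}.

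\textbf{Stopping-time case.} I would write $\sum_{t=1}^{\tau}=\sum_{t=1}^{l_{\max}}\mathbb{I}_{\{\tau\geq t\}}$ and use that $\{\tau\geq t\}=\{\tau\leq t-1\}^c\in\mathcal{F}_{t-1}$. The indicator therefore factors out of every conditional expectation given $y_{<t}$. Repeating the split above inside $\mathbb{E}[\mathbb{I}_{\{\tau\geq t\}}(\cdot)]$, the cross term still vanishes by the same conditional-mean-zero argument. Equation~\eqref{eq:M-error-terminate}, equivalently Theorem~\ref{thm:stoptime-calculate}, then identifies $\mathbb{E}[\sum_{t\leq\tau}\|\bm{e}_t\|_2^2]$ with $\mathbb{E}[\|\bm{M}_\tau\|_2^2]$, which yields \eqref{eq:M-decompose-terminate-apx}.

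\textbf{Inequalities and main obstacle.} Both inequalities \eqref{eq:M-lower-bound-apx} and \eqref{eq:M-lower-bound-terminate-apx} follow immediately, because each variance term is a nonnegative expected squared norm. The main obstacle is definitional rather than computational. In the stopping-time version, the centering in $\mathrm{Var}(\bm{w}_t)$ and $\mathrm{Var}(\bm{g}_t)$ must be chosen so that the centering terms line up with each other. I would use a common centering vector $\bm{c}_t$ that is $\mathcal{F}_{t-1}$-measurable, or constant, for both $\bm{w}_t$ and $\bm{g}_t$. The cross-term cancellation works for any such $\bm{c}_t$, so the identity holds regardless of which admissible centering is adopted, and I would state this explicitly.
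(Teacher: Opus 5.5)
Your proof is correct, and for fixed $t$ it is the paper's argument. The paper cites the law of total variance conditioned on $y_{<i}$, and your split $\bm{w}_i-\mathbb{E}[\bm{w}_i]=\bm{e}_i+(\bm{g}_i-\mathbb{E}[\bm{g}_i])$ with a vanishing cross term is that law written out; it is then followed by Theorem~\ref{thm:M-error} and nonnegativity.

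The stopping-time case is where you genuinely diverge. The paper applies the law of total variance to the stopped vector $\mathbb{I}_{\{\tau\geq t\}}\bm{w}_t$ and gets $\mathrm{Var}(\mathbb{I}_{\{\tau\geq t\}}\bm{w}_t)=\mathbb{E}[\mathbb{I}_{\{\tau\geq t\}}\|\bm{e}_t\|_2^2]+\mathrm{Var}(\mathbb{I}_{\{\tau\geq t\}}\bm{g}_t)$. It then sums over $t$ and tacitly reads the displayed $\mathbb{E}[\sum_{t\le\tau}\mathrm{Var}(\bm{w}_t)]$ as $\sum_t\mathrm{Var}(\mathbb{I}_{\{\tau\geq t\}}\bm{w}_t)$. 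You instead weight centered squared deviations by the indicator, with any $\mathcal{F}_{t-1}$-measurable centering $\bm{c}_t$.

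The two readings are compatible. Take $\bm{c}_t=\mathbb{E}[\mathbb{I}_{\{\tau\geq t\}}\bm{w}_t]=\mathbb{E}[\mathbb{I}_{\{\tau\geq t\}}\bm{g}_t]$. Then each of the paper's two variances exceeds your corresponding term by the same amount, $P(\tau<t)\|\bm{c}_t\|_2^2$. So the paper's identity is equivalent to yours with this centering.

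What each route buys:
\begin{itemize}
\item The paper's route is a one-line appeal to a standard lemma.
\item Yours isolates the only ingredient actually used: the cross term dies because $\mathbb{I}_{\{\tau\geq t\}}$, $\bm{g}_t$ and $\bm{c}_t$ are all $\mathcal{F}_{t-1}$-measurable. It also makes the interpretive choice explicit, which is the part the paper glosses over.
\end{itemize}

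Your caution is warranted. Under the literal reading, $\sum_t P(\tau\geq t)\mathrm{Var}(\bm{w}_t)$ with deterministic per-step variances, both the identity and the inequality can fail when $\tau$ is correlated with $\bm{g}_t$. Take $l_{\max}=2$, vocabulary $\{a,\mathrm{EOS}\}$ and $\bm{g}_1$ uniform. Let $\bm{g}_2$ be uniform after $y_1=a$ and a point mass on $a$ after $y_1=\mathrm{EOS}$. Then $\mathrm{Var}(\bm{w}_1)+P(\tau\geq 2)\mathrm{Var}(\bm{w}_2)=1/2+3/16=11/16$, while $\mathbb{E}[\|\bm{M}_\tau\|_2^2]=1/2+1/4=3/4$.

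One small caution about your first paragraph: if you center by the conditional mean, i.e.\ $\bm{c}_t=\bm{g}_t$, the $\bm{g}$-term vanishes identically. The claim then collapses to Theorem~\ref{thm:stoptime-calculate}, so only a constant centering gives the informative decomposition that matches the paper's.
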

\begin{proof}
    By the law of total variance, we have
    \begin{equation}
        \begin{aligned}
            &\quad\ \mathrm{Var}(\bm{w}_i)\\
            &=\mathbb{E}_{y_{< i}}[\mathrm{Var}(\bm{w}_i\mid y_{< i})]+\mathrm{Var}(\mathbb{E}_{y_i}[\bm{w}_i\mid y_{< i}])\\
            &=\mathbb{E}_{y_{< i}}[\mathbb{E}_{y_i}[(\bm{w}_i-\mathbb{E}_{y_i}[\bm{w}_i\mid y_{< i}])^2\mid y_{< i}]]+\\
            &\quad\ \mathrm{Var}(\mathbb{E}_{y_i}[\bm{w}_i\mid y_{< i}])\\
            &=\mathbb{E}_{y_{< i}}[\mathbb{E}_{y_i}[(\bm{w}_i-\bm{g}_i)^2\mid y_{< i}]]+\mathrm{Var}(\bm{g}_i)\\
            &=\mathbb{E}_{y_{\leq i}}[\bm{e}_i^2]+\mathrm{Var}(\bm{g}_i)
        \end{aligned}
        \label{eq:var-decompose-apx}
    \end{equation}
    \begin{align}
        &\quad\ \mathrm{Var}(\mathbb{I}_{\{\tau\geq t\}}\bm{w}_t)=\mathbb{E}_{y_{< t}}[\mathrm{Var}(\mathbb{I}_{\{\tau\geq t\}}\bm{w}_t\mid y_{< t})]+\notag\\
        &\quad\ \mathrm{Var}(\mathbb{E}_{y_t}[\mathbb{I}_{\{\tau\geq t\}}\bm{w}_t\mid y_{< t}])\notag\\
        &=\mathbb{E}_{y_{< t}}[\mathbb{E}_{y_t}[(\mathbb{I}_{\{\tau\geq t\}}\bm{w}_t-\mathbb{E}_{y_t}[\mathbb{I}_{\{\tau\geq t\}}\bm{w}_t\mid y_{< t}])^2\mid y_{< t}]]+\notag\\
        &\quad\ \mathrm{Var}(\mathbb{E}_{y_t}[\mathbb{I}_{\{\tau\geq t\}}\bm{w}_t\mid y_{< t}])\label{eq:var-decompose-terminate-apx}\\
        &=\mathbb{E}_{y_{< t}}[\mathbb{E}_{y_t}[(\mathbb{I}_{\{\tau\geq t\}}\bm{w}_t-\mathbb{I}_{\{\tau\geq t\}}\mathbb{E}_{y_t}[\bm{w}_t\mid y_{< t}])^2\mid y_{< t}]]+\notag\\
        &\quad\ \mathrm{Var}(\mathbb{I}_{\{\tau\geq t\}}\mathbb{E}_{y_t}[\bm{w}_t\mid y_{< t}])\notag\\
        &=\mathbb{E}_{y_{< t}}[\mathbb{E}_{y_t}[\mathbb{I}_{\{\tau\geq t\}}(\bm{w}_t-\bm{g}_t)^2\mid y_{< t}]]+\mathrm{Var}(\mathbb{I}_{\{\tau\geq t\}}\bm{g}_t)\notag\\
        &=\mathbb{E}_{y_{\leq t}}[\mathbb{I}_{\{\tau\geq t\}}\bm{e}_t^2]+\mathrm{Var}(\mathbb{I}_{\{\tau\geq t\}}\bm{g}_t)\notag
    \end{align}

    Combining equations \eqref{eq:var-decompose-apx} and \eqref{eq:var-decompose-terminate-apx} with equations \eqref{eq:M-error-apx} and \eqref{eq:M-error-terminate-apx}, respectively, yields equations \eqref{eq:M-decompose-apx} and \eqref{eq:M-decompose-terminate-apx}.

    Noting that 
    \begin{gather}
       \sum_{i=1}^t\mathrm{Var}(\bm{g}_i)\geq 0\\
       \mathbb{E}_{y_{\leq l_{\mathrm{max}}}}\left[\sum_{t=1}^{\tau}\mathrm{Var}(\bm{g}_t)\right]\geq 0
    \end{gather}
    from equations \eqref{eq:M-decompose-apx} and \eqref{eq:M-decompose-terminate-apx} we obtain equations \eqref{eq:M-lower-bound-apx} and \eqref{eq:M-lower-bound-terminate-apx}.
\end{proof}

\begin{theorem}
    The following equivalences hold:
    \begin{gather}
        \sum_{i=1}^t\mathrm{Var}(\bm{w}_i)=0\iff \mathbb{E}_{y_{\leq t}}[\|\bm{M}_t\|_2^2]=0\label{eq:M-zero-error-apx}\\
        \begin{gathered}
           \mathbb{E}_{y_{\leq l_{\mathrm{max}}}}\left[\sum_{t=1}^{\tau}\mathrm{Var}(\bm{w}_t)\right]=0\iff \\
           \mathbb{E}_{y_{\leq l_\mathrm{max}}}[\|\bm{M}_{\tau}\|_2^2]=0 
        \end{gathered}
        \label{eq:M-zero-error-terminate-apx}
    \end{gather}
\end{theorem}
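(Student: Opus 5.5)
The plan is to derive both equivalences from the decomposition in Theorem~\ref{thm:var-decompose} together with nonnegativity. The key fact is that $\mathbb{E}[\|\bm{M}_t\|_2^2]=0$ forces every conditional distribution $\bm{g}_i$ to be degenerate. Once each $\bm{g}_i$ is degenerate, the realized tokens are deterministic, so the $\bm{w}_i$ have zero variance.

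The direction ($\Rightarrow$) of \eqref{eq:M-zero-error-apx} is immediate from \eqref{eq:M-lower-bound-apx}. That inequality gives $0\le \mathbb{E}_{y_{\leq t}}[\|\bm{M}_t\|_2^2]\le\sum_{i=1}^t\mathrm{Var}(\bm{w}_i)=0$.

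For ($\Leftarrow$) I would use Theorem~\ref{thm:raw-calculate}, which writes the second moment as $\sum_{i=1}^t\mathbb{E}_{y_{<i}}[1-\|\bm{g}_i\|_2^2]$. Each summand is nonnegative because $\|\bm{g}_i\|_2^2\le\|\bm{g}_i\|_1=1$. So vanishing of the sum forces $\|\bm{g}_i\|_2^2=1$ for every prefix $y_{<i}$ of positive probability. Equality in $\sum_y g_y^2\le\sum_y g_y$ with $0\le g_y\le1$ forces every $g_y\in\{0,1\}$, so $\bm{g}_i$ is a one-hot vector. I would then show by induction on $i$ that there is a single prefix $y^*_{<i}$ carrying all the probability mass, with the next token a deterministic function of it. Hence each $y_i$ is almost surely constant, each $\bm{w}_i$ is almost surely constant, and $\mathrm{Var}(\bm{w}_i)=0$ for every $i$. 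I read $\mathrm{Var}$ here as the trace of the covariance, i.e.\ $\mathbb{E}\|\bm{w}_i-\mathbb{E}\bm{w}_i\|_2^2$, as in the decomposition.

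For the stopping-time version \eqref{eq:M-zero-error-terminate-apx} I would argue the same way with the indicators $\mathbb{I}_{\{\tau\ge t\}}$. The direction ($\Rightarrow$) follows from \eqref{eq:M-lower-bound-terminate-apx}. For ($\Leftarrow$), Theorem~\ref{thm:stoptime-calculate} writes the left side as $\sum_t\mathbb{E}[\mathbb{I}_{\{\tau\ge t\}}(1-\|\bm{g}_t\|_2^2)]$, a sum of nonnegative terms. Its vanishing forces $\bm{g}_t$ to be one-hot on $\{\tau\ge t\}$ for every reachable prefix. The same induction, stopped at $\tau$, then shows the trajectory up to $\tau$ is deterministic, and so is $\tau$ itself. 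Consequently each term $\mathbb{I}_{\{\tau\ge t\}}\mathrm{Var}(\bm{w}_t)$, interpreted as in the proof of Theorem~\ref{thm:var-decompose}, vanishes.

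The main obstacle is interpretive rather than computational. It lies in pinning down what $\mathrm{Var}(\bm{w}_t)$ means inside $\mathbb{E}[\sum_{t=1}^{\tau}\cdot]$, which mixes a random upper limit with an unconditional variance. I would follow the proof of Theorem~\ref{thm:var-decompose} and identify it with $\sum_t\mathrm{Var}(\mathbb{I}_{\{\tau\ge t\}}\bm{w}_t)$. Under that reading the decomposition \eqref{eq:M-decompose-terminate-apx} already gives the inequality needed for ($\Rightarrow$). The induction step for ($\Leftarrow$) also needs care with zero-probability prefixes. I would handle this by stating everything almost surely, i.e.\ only over prefixes with $p(y_{<i})>0$.
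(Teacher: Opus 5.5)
Your proposal is correct and follows essentially the same route as the paper: the forward implication comes from the lower bound in the variance decomposition, and the converse from the fact that a vanishing second moment forces every $\bm{g}_i$ to be one-hot on reachable prefixes, so the generated sequence (and $\tau$) is deterministic. The differences are cosmetic. You read off degeneracy from the summands $1-\|\bm{g}_i\|_2^2$, whereas the paper uses $\mathbb{E}[\bm{e}_i^2]=0$, and you spell out the induction over prefixes and the stopping-time case that the paper only calls similar.
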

\begin{proof}
    From equations \eqref{eq:M-lower-bound-apx} and \eqref{eq:M-lower-bound-terminate-apx}, the left-to-right implications of equations \eqref{eq:M-zero-error-apx} and \eqref{eq:M-zero-error-terminate-apx} are obvious. We first consider the right-to-left implication of equation~\eqref{eq:M-zero-error-apx}.

    From equation~\eqref{eq:M-error-apx}, $\mathbb{E}_{y_{\leq t}}[\|\bm{M}_t\|_2^2]=0$ is equivalent to $\forall 1\leq i\leq t:\mathbb{E}_{y_{\leq t}}[\bm{e}_i^2]=0$. This implies that $\forall 1\leq i\leq t:(\bm{w}_i=\bm{g}_i\ \mathrm{a.e.})$, meaning that for output sequence $y_1y_2\dots,y_t$ with nonzero probability, $\bm{g}_i$ is a one-hot vector. That is, there is only one output sequence with probability 1, and all other output sequence have probability 0. This means that the output at each token is fixed, and $\sum_{i=1}^t\mathrm{Var}(\bm{w}_i)=0$.

    For equation~\eqref{eq:M-zero-error-terminate-apx}, we consider those $t$ with $P(\tau=t)> 0$; the rest of the proof is similar to that of equation~\eqref{eq:M-zero-error-apx}.
\end{proof}

\begin{theorem}
    The following equivalences hold:
    \begin{gather}
        \begin{gathered}
            \sum_{i=1}^t\mathrm{Var}(\bm{w}_i)=t(1-\frac{1}{|\mathcal{V}|})\iff\\
            \forall 1\leq i\leq t:\mathbb{E}_{y_{< i}}[\bm{g}_i]=\frac{1}{|\mathcal{V}|}\mathbf{1}\\
        \end{gathered}\\
        \begin{gathered}
            \mathbb{E}_{y_{\leq t}}[\|\bm{M}_t\|_2^2]=t(1-\frac{1}{|\mathcal{V}|})\iff \\
            \forall 1\leq i\leq t:(\bm{g}_i=\frac{1}{|\mathcal{V}|}\mathbf{1}\ \mathrm{a.e.})
        \end{gathered}
    \end{gather}
    where $\mathbf{1}$ is the all-one vector.
    
    The following implication hold:
    \begin{equation}
        \begin{gathered}
            \mathbb{E}_{y_{\leq t}}[\|\bm{M}_t\|_2^2]=t(1-\frac{1}{|\mathcal{V}|})\Rightarrow\\
            \sum_{i=1}^t\mathrm{Var}(\bm{w}_i)=t(1-\frac{1}{|\mathcal{V}|})
        \end{gathered}
        \label{eq:entail-for-M-mi-var-max-apx}
    \end{equation}
\end{theorem}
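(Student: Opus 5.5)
Throughout, $\mathrm{Var}(\bm{w}_i)$ is read as the total (trace) variance $\mathbb{E}\|\bm{w}_i-\mathbb{E}[\bm{w}_i]\|_2^2$, consistent with the law-of-total-variance computation in the proof of Theorem~\ref{thm:var-decompose}. The whole argument then reduces to one-step identities and the elementary inequality $\|\bm{u}\|_2^2\geq 1/|\mathcal{V}|$ for any probability vector $\bm{u}$, with equality iff $\bm{u}=\frac{1}{|\mathcal{V}|}\mathbf{1}$ (Cauchy--Schwarz against $\mathbf{1}$).

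\textbf{First equivalence.} Since $\bm{w}_i$ is one-hot, $\|\bm{w}_i\|_2^2=1$. By the tower property, $\mathbb{E}[\bm{w}_i]=\mathbb{E}_{y_{<i}}[\mathbb{E}_{y_i}[\bm{w}_i\mid y_{<i}]]=\mathbb{E}_{y_{<i}}[\bm{g}_i]=:\bar{\bm{g}}_i$. Hence
\[
\mathrm{Var}(\bm{w}_i)=\mathbb{E}\|\bm{w}_i\|_2^2-\|\bar{\bm{g}}_i\|_2^2=1-\|\bar{\bm{g}}_i\|_2^2\leq 1-\frac{1}{|\mathcal{V}|},
\]
because $\bar{\bm{g}}_i$ is itself a probability vector. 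A sum of $t$ terms, each bounded by $1-\frac{1}{|\mathcal{V}|}$, attains $t(1-\frac{1}{|\mathcal{V}|})$ iff every term attains its bound. By the equality case of Cauchy--Schwarz, this happens iff $\bar{\bm{g}}_i=\frac{1}{|\mathcal{V}|}\mathbf{1}$ for all $i$.

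\textbf{Second equivalence.} Rewrite Theorem~\ref{thm:raw-calculate} by linearity as
\[
\mathbb{E}\|\bm{M}_t\|_2^2=\sum_{i=1}^t\mathbb{E}_{y_{<i}}\left[1-\|\bm{g}_i\|_2^2\right].
\]
Pointwise, $1-\|\bm{g}_i\|_2^2\leq 1-\frac{1}{|\mathcal{V}|}$, so each expectation is at most $1-\frac{1}{|\mathcal{V}|}$. The total equals $t(1-\frac{1}{|\mathcal{V}|})$ iff each expectation is maximal. The only point needing care, which I expect to be the main (mild) obstacle, is passing from equality in expectation to pointwise equality. The nonnegative random variable $\|\bm{g}_i\|_2^2-\frac{1}{|\mathcal{V}|}$ has zero expectation, so it vanishes almost surely, i.e.\ on every prefix $y_{<i}$ of positive probability. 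This is exactly ``$\bm{g}_i=\frac{1}{|\mathcal{V}|}\mathbf{1}$ a.e.'' The converse direction is direct substitution.

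\textbf{Implication~\eqref{eq:entail-for-M-mi-var-max-apx}.} If $\bm{g}_i=\frac{1}{|\mathcal{V}|}\mathbf{1}$ almost everywhere, then taking expectations over $y_{<i}$ gives $\mathbb{E}_{y_{<i}}[\bm{g}_i]=\frac{1}{|\mathcal{V}|}\mathbf{1}$. The first equivalence then yields the claim.

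Alternatively, Theorem~\ref{thm:var-decompose} gives $\sum_i\mathrm{Var}(\bm{w}_i)\geq\mathbb{E}\|\bm{M}_t\|_2^2$, while the first step bounds $\sum_i\mathrm{Var}(\bm{w}_i)$ above by $t(1-\frac{1}{|\mathcal{V}|})$. If $\mathbb{E}\|\bm{M}_t\|_2^2=t(1-\frac{1}{|\mathcal{V}|})$, the two bounds squeeze $\sum_i\mathrm{Var}(\bm{w}_i)$ to the same value.
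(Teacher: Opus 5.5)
Your proof is correct and follows the paper's argument. It uses the same identity $\mathrm{Var}(\bm{w}_i)=1-\|\mathbb{E}_{y_{<i}}[\bm{g}_i]\|_2^2$, the same termwise bound on $\sum_{i}\mathbb{E}_{y_{<i}}[1-\|\bm{g}_i\|_2^2]$, and the same observation that $\bm{g}_i=\frac{1}{|\mathcal{V}|}\mathbf{1}$ a.e.\ forces $\mathbb{E}_{y_{<i}}[\bm{g}_i]=\frac{1}{|\mathcal{V}|}\mathbf{1}$. The Cauchy--Schwarz equality case and the ``nonnegative variable with zero mean vanishes a.s.'' step make explicit what the paper asserts without justification, and your alternative squeeze via $\sum_i\mathrm{Var}(\bm{w}_i)\geq\mathbb{E}\|\bm{M}_t\|_2^2$ is also valid.
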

\begin{proof}
    \begin{equation}
        \begin{aligned}
            &\quad\ \sum_{i=1}^t\mathrm{Var}(\bm{w}_i)\\
            &=\sum_{i=1}^t(\mathbb{E}_{y_{\leq i}}[\bm{w}_i^2]-\mathbb{E}_{y_{\leq i}}^2[\bm{w}_i])\\
            &=\sum_{i=1}^t(1-(\mathbb{E}_{y_{< i}}[\mathbb{E}_{y_i}[\bm{w}_i\mid y_{<i}]])^2)\\
            &=\sum_{i=1}^t(1-\mathbb{E}_{y_{< i}}^2[\bm{g}_i])\\
            &\leq \sum_{i=1}^t(1-\frac{1}{|\mathcal{V}|})=t(1-\frac{1}{|\mathcal{V}|})
        \end{aligned}
        \label{eq:cal-for-mi-var-apx}
    \end{equation}
    Equality holds in equation~\eqref{eq:cal-for-mi-var-apx} if and only if $\forall 1\leq i\leq t:\mathbb{E}_{y_{< i}}[\bm{g}_i]=\frac{1}{|\mathcal{V}|}\mathbf{1}$.

    Considering equation~\eqref{eq:M-square-formula-apx}, we have
    \begin{align}
        &\quad\ \mathbb{E}_{y_{\leq t}}[\|\bm{M}_t\|_2^2]\notag\\
        &=\mathbb{E}_{y_{< t}}\left[\sum_{i=1}^t(1 - \|\bm{g}_i\|_2^2)\right]\notag\\
        &=\sum_{i=1}^t(1-\mathbb{E}_{y_{< t}}[\|\bm{g}_i\|_2^2])\label{eq:cal-for-M-apx}\\
        &=\sum_{i=1}^t(1-\mathbb{E}_{y_{< i}}[\|\bm{g}_i\|_2^2])\notag\\
        &\leq \sum_{i=1}^t(1-\frac{1}{|\mathcal{V}|})\notag\\
        &=t(1-\frac{1}{|\mathcal{V}|})\notag
    \end{align}

    Equality holds in equation~\eqref{eq:cal-for-M-apx} if and only if $\forall 1\leq i\leq t:\mathbb{E}_{y_{< i}}[\|\bm{g}_i\|_2^2]=\frac{1}{|\mathcal{V}|}$, i.e., $\forall 1\leq i\leq t:(\bm{g}_i=\frac{1}{|\mathcal{V}|}\mathbf{1}\ \mathrm{a.e.})$. Since $\forall 1\leq i\leq t:(\bm{g}_i=\frac{1}{|\mathcal{V}|}\mathbf{1}\ \mathrm{a.e.})\Rightarrow \forall 1\leq i\leq t:\mathbb{E}_{y_{< i}}[\bm{g}_i]=\frac{1}{|\mathcal{V}|}\mathbf{1}$, equation~\eqref{eq:entail-for-M-mi-var-max-apx} holds.
\end{proof}

\end{document}